\newtheorem{assumption}{Assumption}
\newtheorem{lemma}{Lemma}
\newtheorem{corollary}{Corollary}
\newtheorem{theorem}{Theorem}
\newtheorem{definition}{Definition}
\theoremstyle{definition}
\algrenewcommand{\algorithmiccomment}[1]{\hspace{0.5cm}// #1}
\definecolor{mygray}{gray}{0.7}
\newcommand\gray[1]{{\color{mygray}#1}}
\newcommand\red[1]{{\color{red}#1}}
\newcommand\blue[1]{{\color{blue}#1}}
\DeclareMathOperator*{\argmax}{arg\,max}
\begin{document}

\title{VASSO: Variance Suppression for Sharpness-Aware Minimization}
\author{Bingcong Li, Yilang Zhang, and Georgios B. Giannakis \\
	
\thanks{
B. Li is with Department of Computer Science at ETH Z{\"u}rich, 8092 Z{\"u}rich, Switzerland. Email: \texttt{bingcong.li@inf.ethz.ch}.

Y. Zhang and G. B. Giannakis are with the Department of Electrical and Computer Engineering, University of Minnesota, Minneapolis, MN 55455, USA. Emails:\texttt{\{zhan7453, georgios\}@umn.edu.}

Most of this work was performed when B. Li was at the Univ. of Minnesota. 
	}
}

\markboth{IEEE TRANSACTIONS on Pattern Analysis and Machine Intelligence (submitted)}{}

\IEEEtitleabstractindextext{
\begin{abstract}
Sharpness-aware minimization (SAM) has well-documented merits in enhancing generalization of deep neural network models. Accounting for sharpness in the loss function geometry, where neighborhoods of `flat minima' heighten generalization ability, SAM seeks `flat valleys' by minimizing the maximum loss provoked by an \textit{adversarial} perturbation within the neighborhood. Although critical to account for sharpness of the loss function, in practice SAM suffers from `\textit{over-friendly} adversaries,' which can curtail the outmost level of generalization. To avoid such `friendliness,' the present contribution fosters stabilization of adversaries through \textit{variance suppression} (VASSO). VASSO offers a general approach to \textit{provably} stabilize adversaries. In particular, when integrating VASSO with SAM, improved generalizability is numerically validated on extensive vision and language tasks. Once applied on top of a computationally efficient SAM variant, VASSO offers a desirable generalization-computation tradeoff. 
%
\end{abstract}
\begin{IEEEkeywords}
 Generalization, sharpness-aware minimization, deep neural networks, optimization
\end{IEEEkeywords}
}

\maketitle

\section{Introduction}
Deep neural network (DNN) models have advanced the notion of ``learning from data,'' and they have markedly improved performance across various application tasks in vision and language \citep{devlin2018bert,gpt3}. Unfortunately, their overparametrization renders them prone to overfit on training data \citep{zhang2016}, which hampers their generalization ability on unseen data.  This shortcoming has been underscored in practice, and points to a gap in evaluating performance of training.

Common approaches to improving generalizability of DNNs include regularization and data augmentation \citep{dropout2014}. While it is a default choice to integrate regularization such as weight decay and dropout when training in practice, these methods are often insufficient for DNNs, especially when dealing with complicated network architectures \citep{sam4vit}. Another line of efforts resorts to suitable optimization schemes, attempting to find a generalizable local minimum. For example, stochastic gradient descent (SGD) outperforms Adam on certain overparameterized problems owing to its convergence to maximum margin solutions \citep{wilson2017}. Decoupling weight decay from Adam has been empirically seen to facilitate generalizability for many language tasks \citep{loshchilov2017adamw}. Unfortunately, the underlying mechanism promoting generalization remains elusive, and whether the generalization merits carry over to other intricate learning problems calls for extra theoretical investigations.

Our main focus is sharpness-aware minimization (SAM) -- a compelling optimization approach that facilitates state-of-the-art generalizability by exploiting areas of sharpness and flatness in the loss landscape \citep{foret2021,sam4vit}. A high-level interpretation of sharpness is how markedly the loss fluctuates in the neighboring parameter space. Large-scale empirical studies have shown that sharpness-based measures highly correlate with generalization \citep{jiang2020}, and flat minima improve generalizability \citep{keskar2016,foret2021,sam4vit}. This can be conceptually understood using Fig. \ref{fig.example}, where the test loss only slightly increases in flat valleys under distributional shifts. A number of approaches  have embraced sharpness to boost generalization. The work in \cite{keskar2016} suggests that the batchsize of SGD impresses solution flatness. Entropy SGD leverages the local entropy in search of a flat valley \citep{pratik2017}. Different from prior works, SAM induces flatness by explicitly minimizing the \textit{adversarially} perturbed loss, defined as the maximum loss of a neighboring area. Through such a perturbed loss, SAM has boosted generalization in various vision and language tasks \citep{sam4vit,gasam2022}. The mechanism behind SAM's success has been theoretically investigated based on arguments of implicit regularization; see e.g., \citep{maksym2022,wen2023,bartlett2022dynamics}. 

The notion of a perturbing adversary, or the \textit{adversary} for short, is central to SAM's improved generalization because it effectively measures sharpness through the loss difference with the original model \citep{foret2021,zhuang2022, kim2022}. In practice however, accounting for sharpness is undermined by what can be viewed as a \textit{friendly adversary}. Confined by the stochastic linearization for computational efficiency, SAM's adversary only captures the sharpness for a particular minibatch, and can become a friend on other data samples. Because `global sharpness' is not approached accurately, the friendly adversary challenges SAM from attaining its utmost generalizability. To overcome this challenge, the present work advocates \underline{va}riance \underline{s}uppre\underline{s}si\underline{o}n (VASSO\footnote{VASSO coincides with the Greek nickname for Vasiliki.}) to alleviate `friendliness' by stabilizing adversaries. VASSO is a general approach that can be seamlessly integrated with SAM variants to theoretically and numerically demonstrate gain in generalization. This work focuses particularly on two of the most valuable use cases.

The first one is to integrate VASSO with vanilla SAM. The resultant algorithm, also referred to as VASSO, has a \textit{provable} stabilized adversary that showcases favorable numerical performance over SAM for a wide spectrum of deep learning tasks. 

The second application of VASSO manifests in trading off generalization for computational efficiency. At one extreme, the drastically improved generalization of SAM incurs the cost of two backpropagations per iteration. The opposite extreme is SGD, where low computational cost comes at the price of a moderate level of generalization. Recent research has enriched this tradeoff by developing lightweight SAM variants. For instance, LookSAM computes the extra stochastic gradient once every few iterations, and reuses it in a fine-grained manner to approximate the additional gradient \citep{liu2022}. ESAM obtains its adversarial vector based on stochastic weight perturbation, and further saves computation by selecting a subset of the minibatch data for gradient computation \citep{du2022}. The computational burden of SAM can also be lowered by switching between SAM and SGD following a predesigned schedule \citep{zhao222}, or adaptively as in \citep{jiang2023}. SAF connects SAM with distillation to reduce computational complexity~\citep{du2022saf}. However, most of these works reuse the stochastic linearization of SAM, which reduces their ability to cope with friendly adversaries, and prevents them from attaining a desirable generalization-computation tradeoff. To this end, VASSO is combined here with a computationally efficient SAM variant \citep{zhao222}. For a prescribed computational budget, the resultant algorithm termed  efficient VASSO (eVASSO), leads to markedly improved generalization. 

\begin{figure}[t]
	\centering
	\includegraphics[width=.45\textwidth]{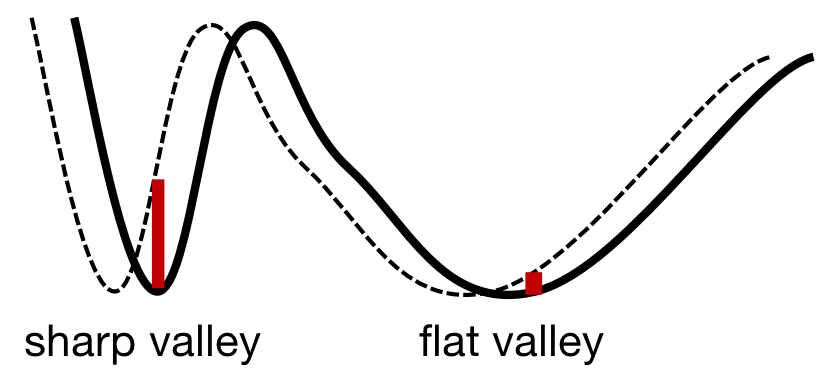}
	\caption{Intuition on why flat minima boost generalization. Solid (dotted) curve denotes training (test) loss. The red bar indicates the gap between training and test loss. Clearly, this gap is smaller on flat valleys.}
	\label{fig.example}
\end{figure}

All in all, our contributions can be summarized as follows.

\noindent
\begin{enumerate}
	\item[\ding{118}] A \textit{friendly adversary} is identified as an obstacle challenging generalizability of SAM. Experiments demonstrate that it can even nullify the generalization benefits.
	
	\item[\ding{118}] \textit{Variance suppression} is developed to handle this issue by stabilizing adversaries. The theoretically guaranteed stability promotes refined global sharpness estimates, and thus alleviates the impact of friendly adversaries. The merits of VASSO are demonstrated also experimentally on tasks such as image classification, domain generalization, label noise, and neural machine translation.
	
\item[\ding{118}]
VASSO improves the generalization-computation tradeoff when integrated with computationally efficient SAM variants such as eSAM. For the same generalization level, the resultant approach, eVASSO, saves $57\%$ and $70\%$ computational overhead relative to eSAM and SAM, respectively. 
	
\end{enumerate}

This work broadens and innovates over \citep{li2023vasso} in three directions: (i) a computationally efficient algorithm is developed and analyzed to desirably trade off generalization for computation in sharpness-aware minimization; (ii) numerical tests incorporate more challenging tasks such as domain generalization; and (iii) further elaboration and insights are included to improve intuition. 

\textbf{Notation}. Bold lowercase (capital) letters denote column vectors (matrices); $\| \mathbf{x}\|$ stands for $\ell_2$ norm of vector $\mathbf{x}$; and $\langle \mathbf{x}, \mathbf{y} \rangle$ is the inner product of $\mathbf{x}$ and $\mathbf{y}$. $\mathbb{S}_\rho(\mathbf{x})$ denotes the surface of a ball with radius $\rho$ centered at $\mathbf{x}$, i.e., $\mathbb{S}_\rho(\mathbf{x}):= \{ \mathbf{x} + \rho \mathbf{u} ~|~ \| \mathbf{u}\| = 1 \}$.

\section{The known, the good, and the challenge}

This section starts with a recap of SAM (i.e., the known), followed by refined analyses and SAM's 
convergence upshot (i.e., the good). Next, the notion of a \textit{friendly adversary} that confines generalizability is elaborated and illustrated numerically.

\subsection{The known}

Aiming at a minimum with a flat basin, SAM enforces small loss around the entire neighborhood in the parameter space \citep{foret2021}. This idea is formulated as a minimax problem
\begin{align}\label{eq.prob}
	\min_{\mathbf{x}} \max_{\| \bm{\epsilon} \| \leq \rho} f \big(\mathbf{x} + \bm{\epsilon} \big)
\end{align}
with $\rho$ denoting the radius of the considered neighborhood, and $f(\mathbf{x}):= \mathbb{E}_{\cal B}[f_{\cal B}(\mathbf{x})]$ the possibly nonconvex objective, where $\mathbf{x}$ is the neural network parameter, and ${\cal B}$ is a random minibatch of data. Formulation \eqref{eq.prob} captures the  implicit sharpness measure $\max_{\| \bm{\epsilon} \| \leq \rho} f \big(\mathbf{x} + \bm{\epsilon} \big) - f(\mathbf{x})$, which effectively drives the optimization trajectory towards the desirable flat valley \citep{kim2022}.

The inner maximization in \eqref{eq.prob} has a natural interpretation as finding an \textit{adversarial model} for $\mathbf{x}_t$, where $t$ denotes the iteration index. Critical as it is, such an adversary prompts \textit{stochastic linearization} to avoid full gradient computation, that is,
\begin{align}\label{eq.sam_epsilon_full}
	\bm{\epsilon}_t & = \argmax_{\| \bm{\epsilon} \| \leq \rho} f(\mathbf{x}_t + \bm{\epsilon}) \stackrel{(a)}{\approx} \argmax_{\| \bm{\epsilon} \| \leq \rho} f(\mathbf{x}_t) + \langle \nabla f( \mathbf{x}_t), \bm{\epsilon} \rangle \nonumber \\
	& \stackrel{(b)}{\approx} \argmax_{\| \bm{\epsilon} \| \leq \rho} f(\mathbf{x}_t) + \langle \mathbf{g}_t(\mathbf{x}_t), \bm{\epsilon} \rangle
\end{align}
where linearization $(a)$ relies on the first-order Taylor expansion of $f(\mathbf{x}_t + \bm{\epsilon})$ that is typically accurate when choosing $\rho$ small. Consider next replacing 
$\nabla f(\mathbf{x}_t)$ in $(b)$ with a stochastic gradient $\mathbf{g}_t(\mathbf{x}_t)$ on minibatch ${\cal B}_t$  to reduce the computational burden of $\nabla f(\mathbf{x}_t)$. Catalyzed by the stochastic linearization in \eqref{eq.sam_epsilon_full}, it is possible to express SAM's adversary in closed form as
\begin{empheq}[box=\fbox]{align}\label{eq.sam_epsilon}
	\textbf{SAM:}~~~~\bm{\epsilon}_t = \rho \frac{\mathbf{g}_t(\mathbf{x}_t)}{ \| \mathbf{g}_t(\mathbf{x}_t) \| }.
\end{empheq}
SAM then adopts the adversarial stochastic gradient $\mathbf{g}_t(\mathbf{x}_t+\bm{\epsilon}_t)$ to update $\mathbf{x}_t$ per SGD fashion. A step-by-step implementation is listed in Alg. \ref{alg.sam}, where the means to find an adversary in lines 5 and 6 is presented in a generic form in order to unify the algorithmic framework with approaches in subsequent sections. 

\begin{algorithm}[t]
    \caption{Generic form of SAM} \label{alg.sam}
    \begin{algorithmic}[1]
    	\State \textbf{Initialize:} $\mathbf{x}_0, \rho$
    	\For {$t=0,\dots,T-1$}
    		\State Sample a minibatch ${\cal B}_t$
    		\State Define stochastic gradient on ${\cal B}_t$ as $\mathbf{g}_t(\cdot)$
    		\State Find $\bm{\epsilon}_t \in \mathbb{S}_\rho(\mathbf{0})$ via stochastic linearization; e.g.,  \\
    		\hspace{0.8cm}  \Comment{\red{\eqref{eq.vso} for \textbf{VASSO}}}, or \blue{\eqref{eq.sam_epsilon} for \textbf{SAM}}
    		\State Calculate stochastic gradient $\mathbf{g}_t(\mathbf{x}_t + \bm{\epsilon}_t)$
			\State Update model via $\mathbf{x}_{t+1} = \mathbf{x}_t - \eta \mathbf{g}_t(\mathbf{x}_t + \bm{\epsilon}_t)$
		\EndFor
		\State \textbf{Return:} $\mathbf{x}_T$
	\end{algorithmic}
\end{algorithm}

\subsection{The good}

For an insightful understanding of SAM, this subsection focuses on Alg. \ref{alg.sam}, and establishes convergence for the solver of \eqref{eq.prob}. Assumptions to this end that are common for nonconvex stochastic optimization are listed next \citep{ghadimi2013,bottou2018,mi2022,zhuang2022}.

\begin{assumption}[lower bounded loss]\label{as.1}
	 Function $f(\mathbf{x})$ is bounded from below, that is, $ \exists~ f^* > -\infty$ such that $f(\mathbf{x}) \geq f^*,\forall \mathbf{x}$.
\end{assumption}
\begin{assumption}[smoothness]\label{as.2}
	The stochastic gradient $\mathbf{g}(\mathbf{x})$ is $L$-Lipschitz, i.e., $\| \mathbf{g}(\mathbf{x}) - \mathbf{g}(\mathbf{y}) \| \leq L \| \mathbf{x}  - \mathbf{y}\|, \forall \mathbf{x}, \mathbf{y}$.
\end{assumption}
\begin{assumption}[bounded variance]\label{as.3}
	 The stochastic gradient $\mathbf{g}(\mathbf{x})$ is unbiased with bounded variance, that is, $\mathbb{E} [\mathbf{g}(\mathbf{x}) | \mathbf{x}] = \nabla f(\mathbf{x})$ and $\mathbb{E} [\| \mathbf{g}(\mathbf{x}) - \nabla f(\mathbf{x}) \|^2 | \mathbf{x}] \leq \sigma^2$ for some finite $\sigma$.
\end{assumption}

Since $\| \bm{\epsilon}_t \| = \rho$ holds for every $t$, the pertinent constraint in \eqref{eq.prob} is never violated; see lines 5 and 6 in Alg. \ref{alg.sam}. Thus, SAM convergence relates to the behavior of the objective for which a tight result is derived next.

\begin{theorem}[SAM convergence]\label{thm.sam}
	If Assumptions \ref{as.1} -- \ref{as.3} hold, $\eta_t \equiv \eta = \frac{\eta_0}{ \sqrt{T}} \leq \frac{2}{3L}$, and $\rho = \frac{\rho_0}{\sqrt{T}}$, then with $c_0 = 1 - \frac{3L\eta}{2} \in (0,1)$ 
	 Alg. \ref{alg.sam} guarantees that 
	\begin{align*}
		&\frac{1}{T}\sum_{t=0}^{T-1}\mathbb{E}\big[ \| \nabla f(\mathbf{x}_t )\|^2 \big] = {\cal O} \bigg( \frac{\sigma^2}{\sqrt{T}} \bigg) ~~~~\text{and}~~~~  \\
		& \frac{1}{T}\sum_{t=0}^{T-1}\mathbb{E}\big[ \| \nabla f(\mathbf{x}_t + \bm{\epsilon}_t)\|^2 \big] = {\cal O} \bigg( \frac{\sigma^2}{\sqrt{T}} \bigg).
	\end{align*}
\end{theorem}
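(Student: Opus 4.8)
The plan is to run the standard descent-lemma argument for nonconvex stochastic optimization, but with careful bookkeeping of the \emph{bias} introduced by evaluating the stochastic gradient at the perturbed iterate $\mathbf{x}_t + \bm{\epsilon}_t$ instead of at $\mathbf{x}_t$. First I would note that Assumption \ref{as.2} (Lipschitzness of every $\mathbf{g}$) carries over to $\nabla f = \mathbb{E}[\mathbf{g}]$, so the $L$-smooth descent inequality applies to the update $\mathbf{x}_{t+1} = \mathbf{x}_t - \eta\, \mathbf{g}_t(\mathbf{x}_t + \bm{\epsilon}_t)$:
\begin{align*}
	f(\mathbf{x}_{t+1}) \leq f(\mathbf{x}_t) - \eta \big\langle \nabla f(\mathbf{x}_t), \mathbf{g}_t(\mathbf{x}_t + \bm{\epsilon}_t) \big\rangle + \frac{L\eta^2}{2} \big\| \mathbf{g}_t(\mathbf{x}_t + \bm{\epsilon}_t) \big\|^2 .
\end{align*}
Taking the conditional expectation given $\mathbf{x}_t$, the subtlety is that $\mathbf{g}_t(\mathbf{x}_t + \bm{\epsilon}_t)$ is \emph{not} an unbiased estimate of $\nabla f(\mathbf{x}_t)$, since $\bm{\epsilon}_t$ is built from the same minibatch ${\cal B}_t$ and so is statistically coupled to the gradient.

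The key device I would use is the three-way decomposition
\begin{align*}
	\mathbf{g}_t(\mathbf{x}_t + \bm{\epsilon}_t) = \nabla f(\mathbf{x}_t) + \big(\mathbf{g}_t(\mathbf{x}_t) - \nabla f(\mathbf{x}_t)\big) + \big(\mathbf{g}_t(\mathbf{x}_t + \bm{\epsilon}_t) - \mathbf{g}_t(\mathbf{x}_t)\big) ,
\end{align*}
whose three pieces are, respectively, the target, a zero-mean noise term with $\mathbb{E}\|\cdot\|^2 \leq \sigma^2$ by Assumption \ref{as.3}, and a perturbation gap bounded \emph{deterministically} by $L\|\bm{\epsilon}_t\| = L\rho$ via Assumption \ref{as.2} and $\|\bm{\epsilon}_t\| = \rho$. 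For the linear term, these give $\mathbb{E}\big[\langle \nabla f(\mathbf{x}_t), \mathbf{g}_t(\mathbf{x}_t + \bm{\epsilon}_t)\rangle\big] \geq \|\nabla f(\mathbf{x}_t)\|^2 - L\rho\,\|\nabla f(\mathbf{x}_t)\|$, where the $L\rho$ comes solely from the bias $\mathbb{E}[\mathbf{g}_t(\mathbf{x}_t + \bm{\epsilon}_t) - \mathbf{g}_t(\mathbf{x}_t)]$. For the quadratic term, applying $\|\mathbf{a}+\mathbf{b}+\mathbf{c}\|^2 \leq 3(\|\mathbf{a}\|^2+\|\mathbf{b}\|^2+\|\mathbf{c}\|^2)$ to the same split yields $\mathbb{E}\| \mathbf{g}_t(\mathbf{x}_t + \bm{\epsilon}_t)\|^2 \leq 3\|\nabla f(\mathbf{x}_t)\|^2 + 3\sigma^2 + 3L^2\rho^2$. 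The factor $3$ is precisely what produces the stated constant: collecting the coefficients of $\|\nabla f(\mathbf{x}_t)\|^2$ gives $-\eta + \tfrac{3L\eta^2}{2} = -\eta\, c_0$ with $c_0 = 1 - \tfrac{3L\eta}{2}$, and the requirement $\eta \leq \tfrac{2}{3L}$ is exactly what keeps $c_0 \in (0,1)$. The leftover bias term $\eta L\rho\,\|\nabla f(\mathbf{x}_t)\|$ is disposed of by Young's inequality (or by Cauchy--Schwarz across the iterations), contributing only an $\mathcal{O}(\eta L^2\rho^2)$ remainder because $\rho$ is driven to zero.

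I would then telescope the one-step bound $\mathbb{E}[f(\mathbf{x}_{t+1})] \leq \mathbb{E}[f(\mathbf{x}_t)] - \eta c_0\,\mathbb{E}\|\nabla f(\mathbf{x}_t)\|^2 + \tfrac{3L\eta^2}{2}(\sigma^2 + L^2\rho^2) + \mathcal{O}(\eta L^2\rho^2)$ over $t=0,\dots,T-1$, lower-bound the telescoped loss by $f^*$ via Assumption \ref{as.1}, and divide by $\eta c_0 T$, obtaining
\begin{align*}
	\frac{1}{T}\sum_{t=0}^{T-1}\mathbb{E}\big[\|\nabla f(\mathbf{x}_t)\|^2\big] \leq \frac{f(\mathbf{x}_0) - f^*}{\eta c_0 T} + \frac{3L\eta}{2 c_0}\big(\sigma^2 + L^2\rho^2\big) + \mathcal{O}\!\Big(\frac{L\rho^2}{c_0}\Big).
\end{align*}
Substituting $\eta = \eta_0/\sqrt{T}$ and $\rho = \rho_0/\sqrt{T}$ makes the first term $\mathcal{O}(1/\sqrt{T})$, the $\sigma^2$ term $\mathcal{O}(\sigma^2/\sqrt{T})$, and all $\rho$-dependent terms $\mathcal{O}(1/T^{3/2})$ or smaller, while $c_0 \to 1$ stays bounded away from $0$; the dominant surviving rate is therefore $\mathcal{O}(\sigma^2/\sqrt{T})$. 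The second claimed bound follows cheaply from the first: smoothness of $\nabla f$ gives $\|\nabla f(\mathbf{x}_t + \bm{\epsilon}_t) - \nabla f(\mathbf{x}_t)\| \leq L\rho$, so $\|\nabla f(\mathbf{x}_t + \bm{\epsilon}_t)\|^2 \leq 2\|\nabla f(\mathbf{x}_t)\|^2 + 2L^2\rho^2$; averaging over $t$ and using $\rho^2 = \rho_0^2/T$ shows the perturbed-gradient average inherits the same $\mathcal{O}(\sigma^2/\sqrt{T})$ rate.

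The step I expect to be the main obstacle is the rigorous control of the linear-term bias, i.e.\ showing that $\mathbb{E}\big[\langle \nabla f(\mathbf{x}_t), \mathbf{g}_t(\mathbf{x}_t + \bm{\epsilon}_t) - \mathbf{g}_t(\mathbf{x}_t)\rangle\big]$ is genuinely $\mathcal{O}(L\rho\,\|\nabla f(\mathbf{x}_t)\|)$ despite the coupling between $\bm{\epsilon}_t$ and ${\cal B}_t$. This is exactly where the SAM analysis departs from textbook SGD: one cannot invoke any independence and must lean entirely on the per-sample, deterministic Lipschitz bound of Assumption \ref{as.2} applied inside the expectation, after which the vanishing schedule $\rho = \rho_0/\sqrt{T}$ ensures this bias never spoils the $\mathcal{O}(1/\sqrt{T})$ rate.
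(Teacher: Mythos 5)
Your proposal follows essentially the same route as the paper: the descent lemma, the decomposition of $\mathbf{g}_t(\mathbf{x}_t+\bm{\epsilon}_t)$ into the target $\nabla f(\mathbf{x}_t)$, a zero-mean noise term controlled by Assumption \ref{as.3}, and a perturbation gap bounded deterministically by $L\rho$ via Assumption \ref{as.2} and $\|\bm{\epsilon}_t\|=\rho$, followed by Young's inequality on the bias, telescoping, and substitution of $\eta=\eta_0/\sqrt{T}$, $\rho=\rho_0/\sqrt{T}$ (with the same smoothness argument for the perturbed-gradient bound). The only divergence is constant bookkeeping: the paper uses nested factor-$2$ splits (so its second-moment bound is $2L^2\rho^2+2\|\nabla f(\mathbf{x}_t)\|^2+2\sigma^2$ and the coefficient $\tfrac{3L\eta^2}{2}$ arises as $\tfrac{L\eta^2}{2}$ from Young's inequality on the bias plus $L\eta^2$ from the quadratic term, whereas your factor-$3$ split attributes all of it to the quadratic term and would in fact leave an extra $\|\nabla f(\mathbf{x}_t)\|^2$ contribution from Young's), but this does not affect the $\mathcal{O}(\sigma^2/\sqrt{T})$ conclusion.
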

Within a constant factor SAM's convergence rate is the same as that of SGD; see Appendix \ref{apdx.sec.proof} for such a factor included in big $\cal O$. Our result avoids the need for a bounded gradient assumption present in prior analyses~\citep{mi2022,zhuang2022}. 

A message from Theorem \ref{thm.sam} is that \textit{any} adversary satisfying $\bm{\epsilon}_t \in \mathbb{S}_\rho(\mathbf{0})$ ensures converge. Because the surface $\mathbb{S}_\rho(\mathbf{0})$ is already a huge space, it challenges the plausible optimality of the adversary, and raises a natural question: \textit{Is it possible to find a more powerful adversary to enhance the generalization ability?} 

\subsection{The challenge: friendly adversary}

\textbf{Adversary to a minibatch can be a friend of other minibatches.} 
SAM's adversary can be `malicious' for a minibatch ${\cal B}_t$ used in iteration $t$, but not necessarily for other data, because it only safeguards $f_{{\cal B}_t} (\mathbf{x}_t + \bm{\epsilon}_t) - f_{{\cal B}_t} (\mathbf{x}_t ) \geq 0$ for a small $\rho$. Given another minibatch data ${\cal B}$, it can be shown that $f_{{\cal B}} (\mathbf{x}_t + \bm{\epsilon}_t) - f_{{\cal B}} (\mathbf{x}_t ) \leq 0$, whenever the stochastic gradients do not align well, meaning $\langle \mathbf{g}_t(\mathbf{x}_t),  \mathbf{g}_{\cal B}(\mathbf{x}_t) \rangle \leq 0$. Note that such misalignment is common because the variance is massive in large-scale training datasets. This issue will be referred to as \textit{friendly adversary}, and it implies that the adversary vector $\bm{\epsilon}_t$ cannot accurately depict the global sharpness of $\mathbf{x}_t$. The `friendly adversary' also has a more involved interpretation, that is, $\mathbf{g}_t(\mathbf{x}_t)$  falls outside the column space of Hessian at convergence; see \citep[Definition 4.3]{wen2023} for further elaboration. This misalignment of high-order derivatives undermines the inductive bias of SAM, thereby worsening generalization.

\begin{figure*}[t]
	\centering
	\begin{tabular}{ccc}
		\hspace{-0.4cm}
		\includegraphics[width=.32\textwidth]{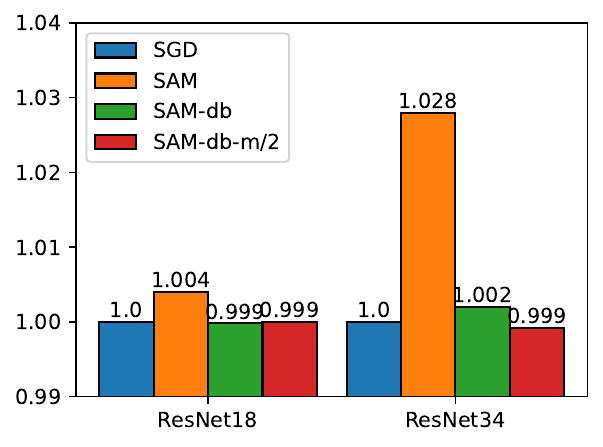}&
		\hspace{-0.3cm}
		\includegraphics[width=.32\textwidth]{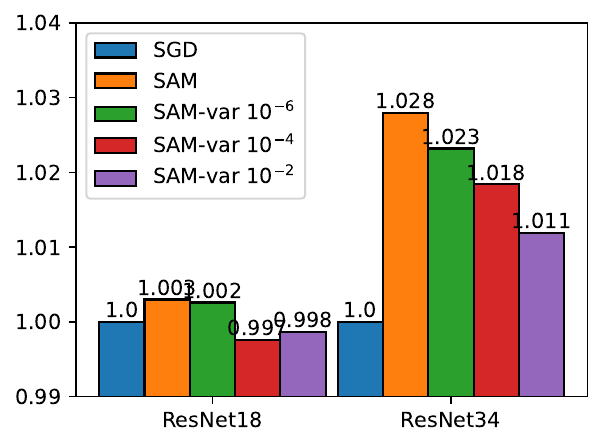}&
		\hspace{-0.3cm}
		\includegraphics[width=.32\textwidth]{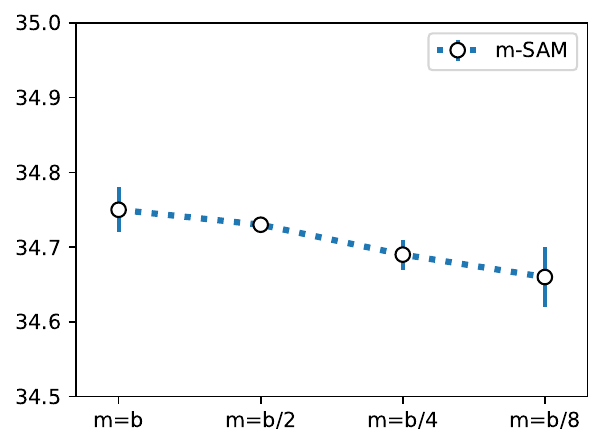}
		\\ 
	  (a)  & ~~~~~(b) & ~~~~~(c)
	\end{tabular}
	\vspace{-0.2cm}
	\caption{(a) A friendly adversary diminishes the generalization ability of SAM; (b) $m$-sharpness may \textit{not} directly correlate with variance since noisy gradient degrades generalization; and (c) $m$-sharpness may not hold universally. Note that test accuracies in (a) and (b) are normalized to SGD.}
	 \label{fig.m-sharpness}
\end{figure*}

\begin{figure*}[t]
	\centering
	\begin{tabular}{ccccc}
		\hspace{-0.2cm}
		\includegraphics[width=.191\textwidth]{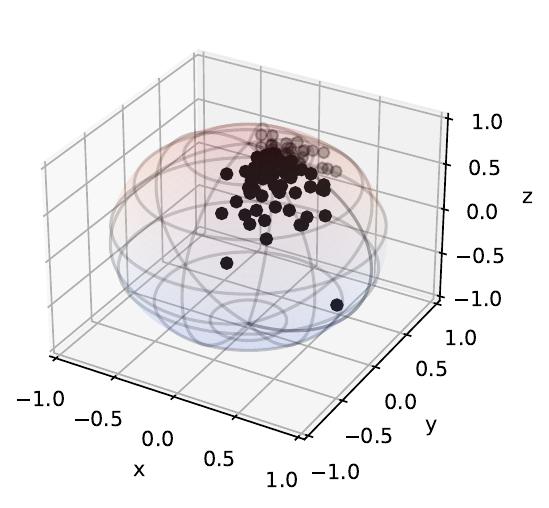}&
		\hspace{-0.5cm}
		\includegraphics[width=.191\textwidth]{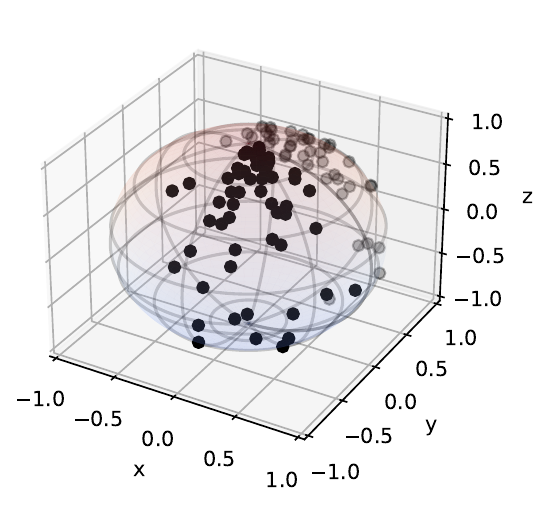}&
		\hspace{-0.5cm}
		\includegraphics[width=.191\textwidth]{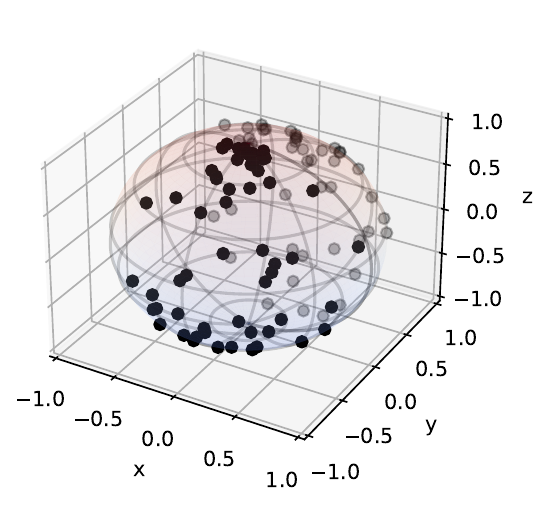}&
		\hspace{-0.5cm}
		\includegraphics[width=.191\textwidth]{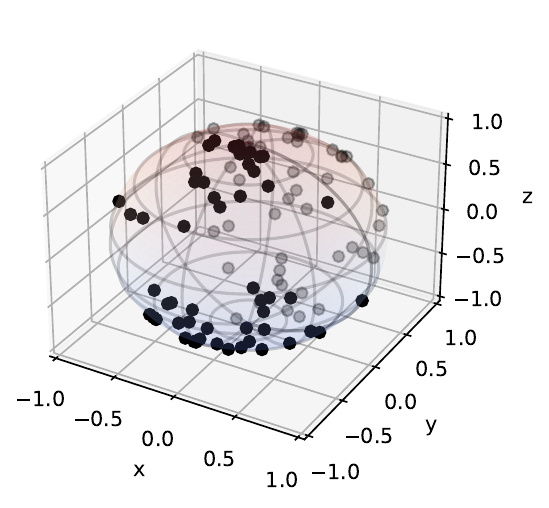}&
		\hspace{-0.3cm}
		\includegraphics[width=.17\textwidth]{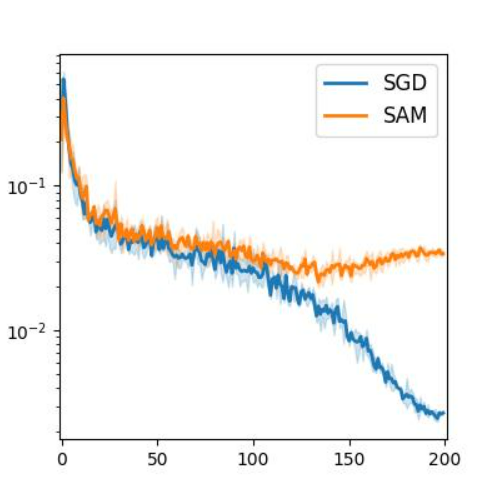}
		\\ 
		\hspace{-0.15cm}
	  (a) SNR $=5$ & \!\!(b) SNR $=1$ & \! (c) SNR $=0.1$ & \! (d) SNR $=0.01$ & (e) SNR in practice 
	\end{tabular}
	\caption{(a) - (d) SAM's adversaries are spread across the sphere; (e) SNR is in $[0.01, 0.1]$ when training a ResNet-18 on CIFAR10, where the SNR is calculated at the first iteration of every epoch.}
	 \label{fig.noise}
	 \vspace{-0.3cm}
\end{figure*}

To numerically visualize the catastrophic impact of the friendly adversary, we manually introduce one by replacing line 5 of Alg. \ref{alg.sam} as $\tilde{\bm{\epsilon}}_t = \rho \tilde{\mathbf{g}}_t(\mathbf{x}_t) / \| \tilde{\mathbf{g}}_t(\mathbf{x}_t) \| $, where $\tilde{\mathbf{g}}_t$ denotes the gradient on $\tilde{\cal B}_t$, a randomly sampled batch of the same size as ${\cal B}_t$. This modified approach is denoted as SAM-db, and its performance for i) ResNet-18 on CIFAR10; and, ii) ResNet-34 on CIFAR100\footnote{\url{https://www.cs.toronto.edu/~kriz/cifar.html}} can be found in Fig. \ref{fig.m-sharpness}(a). Note that the test accuracy is normalized relative to SGD for the ease of visualization. It is evident that the friendly adversary $\tilde{\bm{\epsilon}}_t$ in SAM-db almost nullifies the generalization benefits entirely.

\textbf{Source of a friendly adversary.}
The major cause behind a friendly adversary is due to the gradient variance, which equivalently translates to the lack of stability in SAM's stochastic linearization $(2b)$. An illustrative 
three-dimensional example is shown in Fig. \ref{fig.noise}, where we plot the adversary $\bm{\epsilon}_t$ obtained from different minibatch data. The minibatch gradient is simulated by adding Gaussian noise to the true gradient. When the signal to noise ratio (SNR) is similar to a practical scenario (ResNet-18 on CIFAR10 in Fig. \ref{fig.noise} (e)), it can be seen in Fig. \ref{fig.noise} (c) and (d) that the adversaries \textit{almost uniformly} spread over the sphere, which strongly indicates how challenging is to evaluate sharpness. 

\textbf{Friendly adversary under the lens of Frank Wolfe.} An additional evidence in support of SAM's friendly adversary resides in its link with stochastic Frank Wolfe (SFW), which also relies heavily on stochastic linearization \citep{reddi2016}. The stability of SFW is known to be vulnerable, and its convergence cannot be guaranteed without a sufficiently large batchsize. Appendix \ref{apdx.sec.1} provides the means to obtain friendly adversaries in SAM, and demonstrates that this is tantamount to one-step SFW with a \textit{constant} batchsize. This unveils possible instability of SAM's stochastic linearization.

\subsection{A closer look at friendly adversaries}

Gradient variance is a major driver behind SAM's friendly adversary and unstable stochastic linearization. At first glance however, this seems to conflict with the \textit{empirical} 
notion of $m$-sharpness, which asserts that the benefit of SAM is more pronounced when $\bm{\epsilon}_t$ is found using subsampled ${\cal B}_t$ of size $m$, meaning larger variance. 

Since $m$-sharpness hinges heavily on the loss curvature, it is unlikely to hold universally. For example, a transformer is trained on the IWSLT-14 dataset~\citep{vaswani2017attention}, where the test performance (BLEU) decreases with smaller $m$ even if $\rho$ has been tuned carefully; see Fig. \ref{fig.m-sharpness}(c). In principle, $m$-sharpness is not necessarily related to sharpness or generalization, as verified by the example given in \citep[Sec. 3]{maksym2022}. Moreover, $m$-sharpness formulation can be ill-posed for a specific choice of $m$; see Appendix \ref{apdx.sec.m-sharpness} for further details.

Even in the regime where $m$-sharpness is empirically observed (as with ResNet-18 on CIFAR10 and ResNet-34 on CIFAR100), we have confirmed experimentally that $m$-sharpness is \textit{not} a consequence of gradient variance; thus, there is contradiction with the friendly adversary pursued in this work.

\textbf{Observation 1. Same variance, different generalization.} Let $m=128$ and batchsize $b=128$. Recall the SAM-db experiment in Fig. \ref{fig.m-sharpness}(a). If $m$-sharpness were a consequence of gradient variance, it would be reasonable to expect that SAM-db has comparable performance to SAM simply because their batchzises (hence variance) for finding adversaries are the same. Unfortunately, SAM-db exhibits degraded accuracy. We further increase the variance of $\tilde{\mathbf{g}}_t(\mathbf{x}_t)$ by setting $m = 64$. The resultant algorithm is denoted as SAM-db-m/2. It does not catch up with SAM and performs even worse than SAM-db. These experiments validate that variance/stability correlates with the friendly adversary rather than $m$-sharpness.

\textbf{Observation 2. Enlarged variance degrades generalization.} We explicitly increase variance when finding the adversary by adding Gaussian noise $\bm{\zeta}$ to $\mathbf{g}_t(\mathbf{x}_t)$, i.e., $\hat{\bm{\epsilon}}_t = \rho \frac{ \mathbf{g}_t(\mathbf{x}_t) + \bm{\zeta}}{\|\mathbf{g}_t(\mathbf{x}_t) + \bm{\zeta}\|} $. After tuning for the best $\rho$ to compensate the variance of $\bm{\zeta}$, the test performance is plotted in Fig. \ref{fig.m-sharpness}(b). It can be seen that generalization merits clearly decrease with larger variance on both ResNet-18 and ResNet-34. This again illustrates that the plausible benefit of $m$-sharpness does not stem from increased variance.

While understanding $m$-sharpness is beyond the scope of this work, Observations 1 and 2 jointly suggest that gradient variance correlates with friendly adversaries rather than $m$-sharpness.

\subsection{The metric: characterization of a friendly adversary}

To characterize friendly adversaries analytically, it is convenient to start with necessary notation. Let the \textit{quality} of a stochastic linearization at $\mathbf{x}_t$ with slope $\mathbf{v}$ be ${\cal L}_t ( \mathbf{v} ):= \max_{\| \bm{\epsilon} \| \leq \rho} f(\mathbf{x}_t) + \langle \mathbf{v} ,  \bm{\epsilon} \rangle $. For example, ${\cal L}_t \big( \mathbf{g}_t(\mathbf{x}_t) \big)$ is the quality of SAM. Another critical case is ${\cal L}_t \big( \nabla f(\mathbf{x}_t) \big)$. 
It is shown in \citep{zhuang2022} and (2$b$) that ${\cal L}_t \big( \nabla f(\mathbf{x}_t) \big) \approx \max_{\| \bm{\epsilon} \|\leq \rho} f(\mathbf{x}_t + \bm{\epsilon})$ when $\rho$ is small. Moreover, ${\cal L}_t \big( \nabla f(\mathbf{x}_t) \big) - f(\mathbf{x}_t)$ is also an accurate approximation of sharpness. These observations safeguard ${\cal L}_t ( \nabla f(\mathbf{x}_t) )$ as the anchor when analyzing the stability of stochastic linearization.

\begin{definition}[$\delta$-stability]\label{def.stab}
	A stochastic linearization with slope $\mathbf{v}$ is said to be $\delta$-stable if its quality satisfies $\mathbb{E}\big[ | {\cal L}_t ( \mathbf{v})  - {\cal L}_t ( \nabla f(\mathbf{x}_t) ) | \big] \leq \delta$.
\end{definition}

Letting ${\cal L}_t \big( \nabla f(\mathbf{x}_t) \big) - f(\mathbf{x}_t)$ denote `sharpness,' Definition \ref{def.stab} can be decomposed as
\begin{align}
	{\cal L}_t ( \mathbf{v}) & - {\cal L}_t ( \nabla f(\mathbf{x}_t) ) \\
	& = \underbrace{\Big(	{\cal L}_t ( \mathbf{v}) - f(\mathbf{x}_t) \Big)}_{\text{estimated sharpness}} - \underbrace{\Big( {\cal L}_t ( \nabla f(\mathbf{x}_t) ) - f(\mathbf{x}_t) \Big)}_{\text{sharpness}}. \nonumber
\end{align}
The last equation suggests that $\delta$-stability reflects how well sharpness is estimated using slope $\mathbf{v}$. Hence, a larger $\delta$ implies a more friendly adversary, where the sharpness is approximated less accurately. Next, we will develop our novel approach to ensure improved $\delta$-stability over SAM. 

\section{Variance Suppression}
This section advocates variance suppression (VASSO) for SAM as a means of dealing with the friendly adversary. We start with the intuition behind VASSO, and then establish its improved $\delta$-stability over SAM. We also develop implementation aspects and possible extensions.

\subsection{Design and stability analysis}
A straightforward attempt towards stability is to equip SAM's stochastic linearization with variance-reduced gradients such as SVRG and SARAH \citep{johnson2013,nguyen2017,li2019bb, li2019l2s}. However, the requirement to compute a full gradient every few iterations is infeasible, and hardly scales for tasks such as training DNNs. 

VASSO overcomes this computational burden through a refined stochastic linearization. For a prescribed $\theta \in (0,1)$, VASSO is as follows. 
\begin{subequations}\label{eq.vso}
\begin{empheq}[box=\fbox]{align}
	\textbf{VASSO:}~~~~&\mathbf{d}_t = (1 - \theta)	\mathbf{d}_{t-1} + \theta \mathbf{g}_t(\mathbf{x}_t) \\
	&\bm{\epsilon}_t =  \argmax_{\| \bm{\epsilon} \| \leq \rho} f(\mathbf{x}_t) + \langle \mathbf{d}_t, \bm{\epsilon} \rangle = \rho \frac{\mathbf{d}_t}{ \| \mathbf{d}_t \| }.\label{eq.vso_b}
\end{empheq}
\end{subequations}

Compared with \eqref{eq.sam_epsilon_full} of SAM, the key difference is that VASSO relies on the slope $\mathbf{d}_t$ in \eqref{eq.vso_b} to promote a more stable stochastic linearization. Slope $\mathbf{d}_t$ is an exponentially moving average (EMA) of $\{ \mathbf{g}_t(\mathbf{x}_t) \}_t$ that smooths changes across consecutive iterations. Noticing that $\bm{\epsilon}_t$ and $\mathbf{d}_t$ share the same direction, the relatively smoothed vectors $\{\mathbf{d}_t\}_t$ thus ensure stability of $\{\bm{\epsilon}_t\}_t$ in VASSO. Moreover, as $\mathbf{d}_t$ processes information of different batches of data, the global sharpness can be captured in a principled manner to alleviate the friendly adversary challenge. 

VASSO can be readily integrated with SAM, as summarized in Alg. \ref{alg.sam}. For convenience, the resultant algorithm is also referred to as VASSO. To theoretically characterize the effectiveness of VASSO, our first result considers $\mathbf{d}_t$ as a qualified strategy to estimate $\nabla f(\mathbf{x}_t)$, and delves into its mean-square error (MSE).

\begin{theorem}[Variance suppression]\label{thm.vso}
 Under Assumptions \ref{as.1} -- \ref{as.3}, if 
 i) $\bm{\epsilon}_t$ is obtained by \eqref{eq.vso} with $\theta \in (0, 1)$; and, ii) $\eta_t$ and $\rho$ are selected as in Theorem \ref{thm.sam}, then VASSO guarantees that the MSE of $\mathbf{d}_t$ is bounded by 
	\begin{align}\label{eq.VASSO_mse}
		\mathbb{E} \big[ \| \mathbf{d}_t - \nabla f(\mathbf{x}_t) \|	^2 \big] \leq \theta \sigma^2 + {\cal O}\bigg(\frac{(1-\theta)^2 \sigma^2}{\theta^2 \sqrt{T}}\bigg).
	\end{align}
\end{theorem}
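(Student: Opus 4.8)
The plan is to track the estimation error $\mathbf{e}_t := \mathbf{d}_t - \nabla f(\mathbf{x}_t)$ through a one-step recursion and then unroll it. Writing the stochastic noise as $\mathbf{n}_t := \mathbf{g}_t(\mathbf{x}_t) - \nabla f(\mathbf{x}_t)$ and the gradient drift as $\bm{\Delta}_t := \nabla f(\mathbf{x}_{t-1}) - \nabla f(\mathbf{x}_t)$, the EMA update in \eqref{eq.vso} rearranges into
\begin{align*}
\mathbf{e}_t = (1-\theta)\big(\mathbf{e}_{t-1} + \bm{\Delta}_t\big) + \theta\,\mathbf{n}_t .
\end{align*}
Conditioned on the filtration generated by $\mathcal{B}_0,\dots,\mathcal{B}_{t-1}$, both $\mathbf{e}_{t-1}$ and $\bm{\Delta}_t$ are already determined, while Assumption \ref{as.3} gives $\mathbb{E}[\mathbf{n}_t]=\mathbf{0}$ and $\mathbb{E}[\|\mathbf{n}_t\|^2]\le\sigma^2$. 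Taking the expected squared norm therefore kills the cross term with $\mathbf{n}_t$ and leaves $\mathbb{E}[\|\mathbf{e}_t\|^2] = (1-\theta)^2\,\mathbb{E}[\|\mathbf{e}_{t-1}+\bm{\Delta}_t\|^2] + \theta^2\,\mathbb{E}[\|\mathbf{n}_t\|^2]$.

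Next I would decouple $\mathbf{e}_{t-1}$ from $\bm{\Delta}_t$ using Young's inequality with weight $\gamma=\theta/(1-\theta)$, tuned precisely so that $(1-\theta)^2(1+\gamma)=1-\theta$ produces a genuine contraction. This yields the scalar recursion
\begin{align*}
\mathbb{E}[\|\mathbf{e}_t\|^2] \le (1-\theta)\,\mathbb{E}[\|\mathbf{e}_{t-1}\|^2] + \tfrac{(1-\theta)^2}{\theta}\,\mathbb{E}[\|\bm{\Delta}_t\|^2] + \theta^2\sigma^2 .
\end{align*}
The drift is then controlled by Assumption \ref{as.2} through $\|\bm{\Delta}_t\|\le L\eta\,\|\mathbf{g}_{t-1}(\mathbf{x}_{t-1}+\bm{\epsilon}_{t-1})\|$ (the true gradient inherits $L$-Lipschitzness by Jensen), followed by $\mathbb{E}[\|\mathbf{g}_{t-1}(\mathbf{x}_{t-1}+\bm{\epsilon}_{t-1})\|^2]\le \mathbb{E}[\|\nabla f(\mathbf{x}_{t-1}+\bm{\epsilon}_{t-1})\|^2]+\sigma^2$ from the bias-variance split of Assumption \ref{as.3}.

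Unrolling the scalar recursion produces three pieces: a geometrically decaying transient $(1-\theta)^t\mathbb{E}[\|\mathbf{e}_0\|^2]$; a geometric sum $\sum_k (1-\theta)^{t-k}\theta^2\sigma^2\le\theta\sigma^2$ that supplies the leading term; and a drift sum $\tfrac{(1-\theta)^2}{\theta}L^2\eta^2\sum_k (1-\theta)^{t-k}\big(\mathbb{E}[\|\nabla f(\mathbf{x}_{k-1}+\bm{\epsilon}_{k-1})\|^2]+\sigma^2\big)$. The observation that closes the argument is that VASSO's adversary satisfies $\bm{\epsilon}_t=\rho\,\mathbf{d}_t/\|\mathbf{d}_t\|\in\mathbb{S}_\rho(\mathbf{0})$, so Theorem \ref{thm.sam} applies verbatim to the VASSO trajectory and delivers $\sum_k \mathbb{E}[\|\nabla f(\mathbf{x}_{k-1}+\bm{\epsilon}_{k-1})\|^2]=\mathcal{O}(\sigma^2\sqrt{T})$. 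Bounding the exponential weights by the unweighted sum and substituting $\eta^2=\eta_0^2/T$ renders the drift contribution $\mathcal{O}\big(\tfrac{(1-\theta)^2\sigma^2}{\theta\sqrt{T}}\big)$, which is dominated by the claimed $\mathcal{O}\big(\tfrac{(1-\theta)^2\sigma^2}{\theta^2\sqrt{T}}\big)$ since $\theta\le 1$.

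The main obstacle I anticipate is the drift term $\bm{\Delta}_t$: it is exactly the price of letting $\mathbf{d}_t$ chase the moving target $\nabla f(\mathbf{x}_t)$, and its per-step size is \emph{not} individually small, as only the trajectory-averaged gradient norm is controlled. The resolution is to avoid any pointwise estimate and instead upper bound the exponentially weighted drift sum by the full sum, which is precisely the quantity governed by Theorem \ref{thm.sam}; recognizing that this theorem is applicable because VASSO's adversary stays on $\mathbb{S}_\rho(\mathbf{0})$ is what legitimizes the substitution. The geometrically decaying initial term is harmless, since $\mathbb{E}[\|\mathbf{e}_0\|^2]\le\sigma^2$ and $(1-\theta)^t$ vanishes.
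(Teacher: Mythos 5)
Your overall route is the same as the paper's: the identity $\mathbf{e}_t=(1-\theta)(\mathbf{e}_{t-1}+\bm{\Delta}_t)+\theta\mathbf{n}_t$, the vanishing cross term with $\mathbf{n}_t$, Young's inequality with weight $\theta/(1-\theta)$ to get the contraction factor $1-\theta$, control of the drift via smoothness and Theorem \ref{thm.sam}, and a geometric unrolling (which the paper packages as Lemma \ref{apdx.lemma3}). The conclusion and rate match.

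There is, however, one step that is not justified as written. You bound the drift by invoking the bias--variance split
$\mathbb{E}[\|\mathbf{g}_{t-1}(\mathbf{x}_{t-1}+\bm{\epsilon}_{t-1})\|^2]\le \mathbb{E}[\|\nabla f(\mathbf{x}_{t-1}+\bm{\epsilon}_{t-1})\|^2]+\sigma^2$ directly at the perturbed point. That split requires $\mathbf{g}_{t-1}(\cdot)$ to be conditionally unbiased at the evaluation point, with the evaluation point measurable with respect to the conditioning. Here $\bm{\epsilon}_{t-1}=\rho\,\mathbf{d}_{t-1}/\|\mathbf{d}_{t-1}\|$ and $\mathbf{d}_{t-1}$ contains $\mathbf{g}_{t-1}(\mathbf{x}_{t-1})$, computed on the \emph{same} minibatch ${\cal B}_{t-1}$; so the point $\mathbf{x}_{t-1}+\bm{\epsilon}_{t-1}$ is correlated with the stochastic gradient being evaluated, and Assumption \ref{as.3} does not deliver the orthogonality you need. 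The paper circumvents this (Lemma \ref{apdx.lemma2}) by writing $\|\mathbf{g}_{t-1}(\mathbf{x}_{t-1}+\bm{\epsilon}_{t-1})\|^2\le 2\|\mathbf{g}_{t-1}(\mathbf{x}_{t-1}+\bm{\epsilon}_{t-1})-\mathbf{g}_{t-1}(\mathbf{x}_{t-1})\|^2+2\|\mathbf{g}_{t-1}(\mathbf{x}_{t-1})\|^2\le 2L^2\rho^2+2\|\mathbf{g}_{t-1}(\mathbf{x}_{t-1})\|^2$ and only then applying the split at $\mathbf{x}_{t-1}$, where the conditioning is clean; this costs a factor of $2$ and an extra $2L^2\rho^2={\cal O}(1/T)$ and then uses the first (unperturbed) bound of Theorem \ref{thm.sam} rather than the second. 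The rate is unchanged, so your argument is repairable by this one substitution, but as stated the step is a genuine gap. A second, more cosmetic point: when you "bound the exponential weights by the unweighted sum" you should apply that only to the gradient-norm piece; applying it to the $\sigma^2$ piece of the drift sum would give $L^2\eta_0^2\sigma^2={\cal O}(\sigma^2)$ rather than something vanishing in $T$, so that piece must keep its geometric weights (yielding $\sigma^2/\theta$ and hence ${\cal O}(\sigma^2/(\theta T))$ after multiplying by $\eta^2L^2$).
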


Because SAM's gradient estimate has a looser bound on MSE (or variance), that is, $\mathbb{E}[  \| \mathbf{g}_t(\mathbf{x}_t) - \nabla f(\mathbf{x}_t) \|	^2  ] \leq \sigma^2$, the shrunk MSE in Theorem \ref{thm.vso} justifies the notion of variance suppression. 

If we further assume that the gradient is also bounded, meaning $\mathbb{E}[\| \nabla f(\mathbf{x}_t) \|^2] \leq G$, it is possible to further tighten the dependence on $T$ in Theorem \ref{thm.vso} as follows
\begin{align*}
	\mathbb{E} \big[ \| \mathbf{d}_t - \nabla f(\mathbf{x}_t) \|	^2 \big] \leq \theta \sigma^2 + {\cal O}\bigg(\frac{(1-\theta)^2 G}{\theta^2 T}\bigg).
\end{align*}
The proof follows directly by tightening inequality \eqref{apdx.eq.mse1} with the bounded gradient assumption. Other theorems in this paper also have an improved $T$ dependence given this extra  assumption, but we do not state them explicitly to avoid repetition. Next, we quantify the claimed stability with the suppressed variance. 
 
\begin{theorem}[Stabilizing adversaries with VASSO.]\label{thm.salad}
Under Assumptions \ref{as.1} -- \ref{as.3}, and the hyperparameter choices of Theorem \ref{thm.vso}, the stochastic linearization is $\rho\big[\sqrt{\theta}\sigma + {\cal O}(\frac{\sigma }{\theta T^{1/4}})\big]$-stable for VASSO, compared with $\rho\sigma$-stability of SAM. 
\end{theorem}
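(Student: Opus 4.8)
The plan is to reduce the $\delta$-stability quantity to the mean-square error already controlled in Theorem \ref{thm.vso}, since the stability metric is essentially a first-moment version of that MSE bound, scaled by $\rho$. First I would evaluate the quality functional in closed form. Because ${\cal L}_t(\mathbf{v}) = \max_{\|\bm{\epsilon}\| \leq \rho} f(\mathbf{x}_t) + \langle \mathbf{v}, \bm{\epsilon}\rangle$ and the maximizer is $\bm{\epsilon} = \rho \mathbf{v}/\|\mathbf{v}\|$, this gives ${\cal L}_t(\mathbf{v}) = f(\mathbf{x}_t) + \rho\|\mathbf{v}\|$. Consequently the two $f(\mathbf{x}_t)$ terms cancel in the stability expression, leaving
\begin{equation*}
	\big| {\cal L}_t(\mathbf{v}) - {\cal L}_t(\nabla f(\mathbf{x}_t)) \big| = \rho \big| \|\mathbf{v}\| - \|\nabla f(\mathbf{x}_t)\| \big|.
\end{equation*}

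Next I would apply the reverse triangle inequality $\big|\|\mathbf{v}\| - \|\nabla f(\mathbf{x}_t)\|\big| \leq \|\mathbf{v} - \nabla f(\mathbf{x}_t)\|$ to upper bound the right-hand side by $\rho\|\mathbf{v} - \nabla f(\mathbf{x}_t)\|$, then take expectations and invoke Jensen's inequality (concavity of the square root) to obtain $\mathbb{E}\big[|{\cal L}_t(\mathbf{v}) - {\cal L}_t(\nabla f(\mathbf{x}_t))|\big] \leq \rho\sqrt{\mathbb{E}[\|\mathbf{v} - \nabla f(\mathbf{x}_t)\|^2]}$. This converts the stability question into a bound on the root-MSE of the slope $\mathbf{v}$, which is precisely the object that Theorem \ref{thm.vso} controls for VASSO and that Assumption \ref{as.3} controls for SAM.

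For SAM, setting $\mathbf{v} = \mathbf{g}_t(\mathbf{x}_t)$ and using the variance bound of Assumption \ref{as.3} immediately yields $\rho\sqrt{\sigma^2} = \rho\sigma$, recovering the stated baseline. For VASSO, I would set $\mathbf{v} = \mathbf{d}_t$ and substitute the MSE bound of Theorem \ref{thm.vso}, then split the square root using $\sqrt{a+b} \leq \sqrt{a} + \sqrt{b}$ for nonnegative $a,b$. The leading term $\sqrt{\theta\sigma^2} = \sqrt{\theta}\sigma$ and the residual $\sqrt{{\cal O}((1-\theta)^2\sigma^2/(\theta^2\sqrt{T}))} = {\cal O}((1-\theta)\sigma/(\theta T^{1/4}))$ then combine to give exactly $\rho[\sqrt{\theta}\sigma + {\cal O}(\sigma/(\theta T^{1/4}))]$ after absorbing the harmless factor $(1-\theta) \leq 1$ into the big-${\cal O}$.

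No single step presents a serious obstacle: once Theorem \ref{thm.vso} is granted, the argument is a short chain of norm inequalities. The only place demanding mild care is the $\sqrt{a+b}$ splitting together with the big-${\cal O}$ bookkeeping, ensuring the $T^{1/4}$ rate and the $\theta$-dependence in the denominator are tracked correctly. Conceptually, the Jensen step is the crux, since it is what trades the second-moment bound delivered by Theorem \ref{thm.vso} for the first-moment form required by Definition \ref{def.stab}, and it is the reason the $\theta\sigma^2$ variance improvement manifests as a $\sqrt{\theta}\sigma$ improvement in stability.
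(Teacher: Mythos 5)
Your proposal is correct and follows essentially the same route as the paper's proof: evaluate ${\cal L}_t(\mathbf{v}) = f(\mathbf{x}_t) + \rho\|\mathbf{v}\|$ in closed form, bound the difference by $\rho\|\mathbf{v} - \nabla f(\mathbf{x}_t)\|$ via the (reverse) triangle inequality, pass to the root-MSE with Jensen, and then invoke Assumption \ref{as.3} for SAM and Theorem \ref{thm.vso} for VASSO. The only cosmetic difference is that you collapse the two-sided triangle-inequality argument of the paper into a single application of $\big|\|\mathbf{a}\|-\|\mathbf{b}\|\big| \leq \|\mathbf{a}-\mathbf{b}\|$, and you make the $\sqrt{a+b}\leq\sqrt{a}+\sqrt{b}$ splitting explicit, which the paper leaves implicit.
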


Theorem \ref{thm.salad} contends that VASSO alleviates the friendly adversary challenge by promoting stability. Qualitatively, VASSO is roughly $\sqrt{\theta} \in (0,1)$ times more stable relative to SAM, since the term in big ${\cal O}$ notation is negligible given a sufficiently large $T$. Theorem \ref{thm.salad} also guides the choice of $\theta$ -- preferably small but not too small, otherwise the term in big ${\cal O}$ is inversely amplified. 

\textbf{Visualize stabilized adversaries.} To validate our findings in Theorem \ref{thm.vso}, we train a ResNet-18 on CIFAR10 and depict the evolution of $\| \bm{\epsilon}_t -  \bm{\epsilon}_{t-1} \|$ for both SAM and VASSO in Fig. \ref{fig.vasso_eps}. Because both $\bm{\epsilon}_t$ and $\bm{\epsilon}_{t-1}$ reside on $\mathbb{S}_\rho(\mathbf{0})$, the value of $\| \bm{\epsilon}_t -  \bm{\epsilon}_{t-1} \|$ is solely determined by the angle between these vectors. In essence, $\| \bm{\epsilon}_t -  \bm{\epsilon}_{t-1} \|$ characterizes well the stability. The results presented in Fig. \ref{fig.vasso_eps} distinctly reveal that VASSO indeed improves the stability adversary over SAM.

\subsection{Additional perspectives of VASSO}

Having dealt with stability, this subsection proceeds with other aspects of VASSO.

\textbf{Convergence.} Summarized in the following corollary, the convergence of VASSO can be pursued as a direct consequence of Theorem \ref{thm.sam}. The reason is that $\bm{\epsilon}_t \in \mathbb{S}_\rho(\mathbf{0})$ is satisfied by \eqref{eq.vso}.  

\begin{corollary}[VASSO convergence]\label{coro.vso}
Under Assumptions \ref{as.1} -- \ref{as.3}, and with $\eta_t$ and $\rho$ as in Theorem \ref{thm.sam}, VASSO ensures for any $\theta \in (0,1)$ that 
	\begin{align*}
		& \frac{1}{T}\sum_{t=0}^{T-1}\mathbb{E}\big[ \| \nabla f(\mathbf{x}_t )\|^2 \big]	\leq {\cal O} \bigg( \frac{\sigma^2}{\sqrt{T}} \bigg) ~~~~\text{and}~~~~  \\
		& \frac{1}{T}\sum_{t=0}^{T-1}\mathbb{E}\big[ \| \nabla f(\mathbf{x}_t + \bm{\epsilon}_t)\|^2 \big]\leq {\cal O} \bigg( \frac{\sigma^2}{\sqrt{T}} \bigg).
	\end{align*}
\end{corollary}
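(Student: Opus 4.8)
The plan is to recognize that VASSO is nothing but a concrete instantiation of the generic template in Alg.~\ref{alg.sam}, and therefore to derive the corollary by directly invoking Theorem~\ref{thm.sam} rather than re-running the convergence analysis. The only place where VASSO departs from SAM is line~5, i.e., the rule generating the adversary $\bm{\epsilon}_t$; the model update in line~7, $\mathbf{x}_{t+1} = \mathbf{x}_t - \eta \mathbf{g}_t(\mathbf{x}_t + \bm{\epsilon}_t)$, together with the choices of $\eta$ and $\rho$, are shared. Hence, if VASSO's adversary meets the single structural requirement exploited by Theorem~\ref{thm.sam} -- namely $\bm{\epsilon}_t \in \mathbb{S}_\rho(\mathbf{0})$ -- both displayed bounds transfer verbatim.

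First I would verify this norm constraint. From the closed form in \eqref{eq.vso_b}, $\bm{\epsilon}_t = \rho\, \mathbf{d}_t / \|\mathbf{d}_t\|$, so that $\|\bm{\epsilon}_t\| = \rho$ for every $t$ whenever $\mathbf{d}_t \neq \mathbf{0}$ (the degenerate case $\mathbf{d}_t = \mathbf{0}$ is handled by assigning any fixed unit direction scaled by $\rho$). Consequently $\bm{\epsilon}_t \in \mathbb{S}_\rho(\mathbf{0})$, exactly as needed. This is immediate from the definition of VASSO: the EMA recursion for $\mathbf{d}_t$ in \eqref{eq.vso} reshapes only the \emph{direction} of $\bm{\epsilon}_t$, never its length.

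Next I would revisit the proof of Theorem~\ref{thm.sam} in Appendix~\ref{apdx.sec.proof} to certify that it never exploits the specific orientation of SAM's adversary $\rho\,\mathbf{g}_t(\mathbf{x}_t)/\|\mathbf{g}_t(\mathbf{x}_t)\|$, but relies solely on $\|\bm{\epsilon}_t\| = \rho$ together with Assumptions~\ref{as.2}--\ref{as.3}. There, the norm constraint enters only to control the deviation of the perturbed stochastic gradient from the unperturbed one via $L$-smoothness, $\|\mathbf{g}_t(\mathbf{x}_t + \bm{\epsilon}_t) - \mathbf{g}_t(\mathbf{x}_t)\| \leq L\|\bm{\epsilon}_t\| = L\rho$, which feeds the descent inequality; unbiasedness and bounded variance of $\mathbf{g}_t(\mathbf{x}_t)$ then close the recursion. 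Since none of these ingredients references how $\bm{\epsilon}_t$ is pointed, they remain intact for the VASSO adversary, matching the remark following Theorem~\ref{thm.sam} that \emph{any} $\mathbb{S}_\rho(\mathbf{0})$-valued adversary ensures convergence.

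The subtlety I expect to warrant care -- rather than heavy computation -- is that VASSO's $\bm{\epsilon}_t$ depends not only on the current minibatch gradient $\mathbf{g}_t(\mathbf{x}_t)$ but also on the history accumulated in $\mathbf{d}_{t-1}$, so $\bm{\epsilon}_t$ is correlated with the draw ${\cal B}_t$. Here I would emphasize that SAM's own adversary already carries the same within-iteration correlation, and that the analysis of Theorem~\ref{thm.sam} accommodates any history-measurable, $\mathbb{S}_\rho(\mathbf{0})$-valued perturbation precisely because it conditions on the past and invokes only $\|\bm{\epsilon}_t\| = \rho$; the extra dependence of $\mathbf{d}_t$ on earlier batches thus does not perturb the argument. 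With the structural requirement confirmed and the proof of Theorem~\ref{thm.sam} shown to be direction-agnostic, both ${\cal O}(\sigma^2/\sqrt{T})$ bounds for VASSO follow at once, establishing the corollary.
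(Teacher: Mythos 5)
Your proposal is correct and matches the paper's own argument: the paper proves Corollary~\ref{coro.vso} exactly by observing that \eqref{eq.vso} keeps $\bm{\epsilon}_t \in \mathbb{S}_\rho(\mathbf{0})$ and that the proof of Theorem~\ref{thm.sam} (via Lemmas~\ref{apdx.lemma1} and \ref{apdx.lemma2}) uses only $\|\bm{\epsilon}_t\|=\rho$ together with Assumptions~\ref{as.1}--\ref{as.3}, never the direction of the adversary. Your additional remark on the within-iteration correlation of $\bm{\epsilon}_t$ with ${\cal B}_t$ is a sensible sanity check and is handled exactly as you describe, since the relevant cross term is bounded pathwise by Cauchy--Schwarz and smoothness.
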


\begin{figure}[t]
	\centering	
	\includegraphics[width=0.44\textwidth]{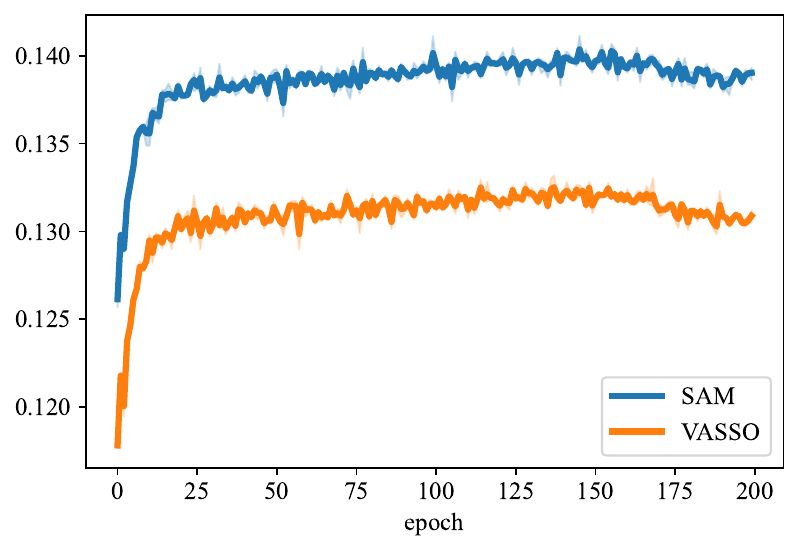}
	\vspace{-0.45cm}
	\caption{The adversary of VASSO is more stable than SAM.}
	\label{fig.vasso_eps}
\end{figure}

\textbf{Pronounced sharpness around optimum.} Consider a near optimal region, where $\| \nabla f(\mathbf{x}_t)\| \rightarrow 0$, and suppose a big data regime, where $\mathbf{g}_t(\mathbf{x}_t) = \nabla f(\mathbf{x}_t) + \bm{\zeta}$ for some Gaussian random vector $\bm{\zeta}$ with covariance matrix  $\sigma^2\mathbf{I}$ for simplicity. (Using arguments from von Mises-Fisher statistics \citep{mardia2000directional} generalization is also possible). SAM has difficulty estimating the flatness in this case, since $\bm{\epsilon}_t \approx \rho\bm{\zeta} / \|\bm{\zeta} \|$ is uniformly distributed over $\mathbb{S}_\rho(\mathbf{0})$ regardless of whether the neighboring region is sharp or not. On the other hand, VASSO has $\bm{\epsilon}_t = \rho\mathbf{d}_t / \|\mathbf{d}_t \|$. Because $\{\mathbf{g}_\tau (\mathbf{x}_\tau)\}_\tau$ on sharper valleys tend to have larger magnitude, their EMA $\mathbf{d}_t$ is helpful for distinguishing sharp valleys with flat ones.

\textbf{Memory efficient implementation.} Although at first glance VASSO has to store both $\mathbf{d}_t$ and $\bm{\epsilon}_t$, it can be implemented in a much more memory efficient manner. It is sufficient to store $\mathbf{d}_t$ together with a scalar $\| \mathbf{d}_t \|$ so that $\bm{\epsilon}_t$ can be recovered on demand through normalization; see \eqref{eq.vso_b}. Hence, VASSO's memory requirements are the same as those of SAM.

\textbf{Extensions.} VASSO has the potential to boost the performance of other SAM variants by stabilizing their stochastic linearization through variance suppression. For instance, adaptive SAM methods such as \citep{kwon2021,kim2022} are scale invarianct, while GSAM \citep{zhuang2022} minimizes a surrogate gap jointly with \eqref{eq.prob}. Nevertheless, these SAM variants leverage stochastic linearization in \eqref{eq.sam_epsilon_full}. It is thus envisioned that VASSO can also alleviate the possible friendly adversary issues therein. As we shall see in Sec. \ref{sec.numerical}, cross-fertilizing VASSO with GSAM yields improved numerical performance.  

\section{VASSO improves generalization for computation tradeoff}

\begin{algorithm}[t]
    \caption{Efficient VASSO (eVASSO)} \label{alg.r-vasso}
    \begin{algorithmic}[1]
    	\State \textbf{Initialize:} $\mathbf{x}_0, \rho, p$, and a sequence of iid Bernoulli random variables $\{ R_t \}_{t=0}^{T-1}$, where $R_t=1$ with probability $p$.
    	\For {$t=0,\dots,T-1$}
    		\State Sample a minibatch ${\cal B}_t$
    		\State Define stochastic gradient on ${\cal B}_t$ as $\mathbf{g}_t(\cdot)$
    		\State Calculate $\mathbf{d}_t = (1 - \theta)	\mathbf{d}_{t-1} + \theta \mathbf{g}_t(\mathbf{x}_t)$
    		\If{$R_t = 1$} \Comment{\gray{Calculate the second gradient}}
    		\State $\bm{\epsilon}_t =  \argmax_{\| \bm{\epsilon} \| \leq \rho} f(\mathbf{x}_t) + \langle \mathbf{d}_t, \bm{\epsilon} \rangle = \rho \frac{\mathbf{d}_t}{ \| \mathbf{d}_t \| }$ 
    		\State Calculate stochastic gradient $\hat{\mathbf{g}}_t = \mathbf{g}_t(\mathbf{x}_t + \bm{\epsilon}_t)$ 
    		 \Else {~~$\hat{\mathbf{g}}_t = \mathbf{g}_t(\mathbf{x}_t)$} \Comment{\gray{Skip gradient computation}}
    		\EndIf
 
			\State Update model via $\mathbf{x}_{t+1} = \mathbf{x}_t - \eta \hat{\mathbf{g}}_t$
		\EndFor
		\State \textbf{Return:} $\mathbf{x}_T$
	\end{algorithmic}
\end{algorithm}

The generalization benefits of SAM come at the price of approximately doubled computation per iteration compared with SGD because of the need to compute $\mathbf{g}_t(\mathbf{x}_t)$ and $\mathbf{g}_t(\mathbf{x}_t + \bm{\epsilon}_t)$. While there are works trading off generalization for computation \citep{du2022,zhao222,jiang2023}, unfortunately these efforts are still under the threat of friendly adversaries, suggesting that the optimal tradeoff is elusive.

In pursuit of a more favorable tradeoff that leans toward the generalization side, our idea is to leverage VASSO's stabilized stochastic linearization in eSAM proposed originally in \citep{zhao222}. The resultant algorithm termed as efficient VASSO (eVASSO) is listed as Alg. \ref{alg.r-vasso}. 
It introduces a Bernoulli random variable $R_t$ per iteration, where $R_t=1$ with probability $p \in [0,1]$. eVASSO alternates between VASSO or SGD based on the value of $R_t$ to save computation. It can be seen that the number of gradient computation is $(1+p)$ per iteration in expectation. The hyperparameter $p$ plays a pivotal role in determining the generalization-computation tradeoff of eVASSO. 
A smaller $p$ advocates more aggressive reduction of computational time, while heightening the risk of compromised generalization comparing with vanilla VASSO. However, thanks to the stabilized stochastic linearization, eVASSO still outperforms SAM for $R_t=1$, as asserted next. 

\begin{theorem}[Variance suppression and stability of eVASSO]\label{thm.r-vso}
Under Assumptions \ref{as.1} -- \ref{as.3},  if i) $\bm{\epsilon}_t$ is obtained by \eqref{eq.vso} with $\theta \in (0, 1)$; and, ii) $\eta_t$ and $\rho$ are selected as in Theorem \ref{thm.sam}, then eVASSO guarantees that for any $p \in (0, 1]$, it holds that 
	\begin{align*}
		\mathbb{E} \big[ \| \mathbf{d}_t - \nabla f(\mathbf{x}_t) \|	^2 \big] \leq \theta \sigma^2 + {\cal O}\bigg(\frac{(1-\theta)^2 \sigma^2}{\theta^2 \sqrt{T}}\bigg).
	\end{align*}
	Moreover, whenever $R_t=1$, the stochastic linearization of eVASSO is $\rho\big[\sqrt{\theta}\sigma + {\cal O}(\frac{\sigma }{\theta T^{1/4}})\big]$-stable.
\end{theorem}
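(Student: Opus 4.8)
The plan is to recognize that Theorem~\ref{thm.r-vso} is, up to one trajectory-level check, a direct transcription of Theorems~\ref{thm.vso} and~\ref{thm.salad} to the randomized update of Alg.~\ref{alg.r-vasso}. The crucial structural observation is that line~5 of Alg.~\ref{alg.r-vasso} computes the slope via exactly the same recursion $\mathbf{d}_t = (1-\theta)\mathbf{d}_{t-1} + \theta \mathbf{g}_t(\mathbf{x}_t)$ as VASSO in \eqref{eq.vso}, \emph{independently} of the Bernoulli variable $R_t$. Consequently the entire argument producing the MSE bound in Theorem~\ref{thm.vso} can be replayed, and the only quantity that feels the difference between eVASSO and VASSO is the gradient-drift term coupling $\nabla f(\mathbf{x}_t)$ across consecutive iterations.

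First I would unroll the EMA recursion for $\mathbf{d}_t$ and split $\mathbb{E}[\|\mathbf{d}_t - \nabla f(\mathbf{x}_t)\|^2]$ into a variance contribution, stemming from averaging the zero-mean noise in $\{\mathbf{g}_\tau(\mathbf{x}_\tau)\}_\tau$ under Assumption~\ref{as.3}, and a drift contribution $\mathbb{E}[\|\nabla f(\mathbf{x}_{\tau+1}) - \nabla f(\mathbf{x}_\tau)\|^2]$ controlled by the $L$-smoothness of Assumption~\ref{as.2}. The variance term yields the leading $\theta\sigma^2$ and is identical to VASSO. For the drift term I would write $\mathbb{E}[\|\nabla f(\mathbf{x}_{\tau+1}) - \nabla f(\mathbf{x}_\tau)\|^2] \leq L^2 \eta^2 \mathbb{E}[\|\hat{\mathbf{g}}_\tau\|^2]$ and take expectation over the independent $R_\tau$, so that $\mathbb{E}[\|\hat{\mathbf{g}}_\tau\|^2] = p\,\mathbb{E}[\|\mathbf{g}_\tau(\mathbf{x}_\tau+\bm{\epsilon}_\tau)\|^2] + (1-p)\,\mathbb{E}[\|\mathbf{g}_\tau(\mathbf{x}_\tau)\|^2]$. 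Both second moments are bounded by $\sigma^2$ plus a gradient-norm term that stays uniformly bounded along the trajectory via the convergence inherited through Theorem~\ref{thm.sam}; since $\eta = \eta_0/\sqrt{T}$, the drift is of order $\eta^2 = \mathcal{O}(1/T)$, and after summing the geometric EMA weights it collapses into the stated $\mathcal{O}\big((1-\theta)^2\sigma^2/(\theta^2\sqrt{T})\big)$ residual, matching \eqref{eq.VASSO_mse}.

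For the stability claim I would restrict to the event $R_t = 1$, on which eVASSO actually deploys the adversary $\bm{\epsilon}_t = \rho\,\mathbf{d}_t/\|\mathbf{d}_t\|$, so Definition~\ref{def.stab} applies with slope $\mathbf{v} = \mathbf{d}_t$. Using the closed form ${\cal L}_t(\mathbf{v}) = f(\mathbf{x}_t) + \rho\|\mathbf{v}\|$, the stability gap reduces to $\rho\big(\|\mathbf{d}_t\| - \|\nabla f(\mathbf{x}_t)\|\big)$; the reverse triangle inequality bounds its magnitude by $\rho\|\mathbf{d}_t - \nabla f(\mathbf{x}_t)\|$, and Jensen's inequality gives $\mathbb{E}[\|\mathbf{d}_t - \nabla f(\mathbf{x}_t)\|] \leq \sqrt{\mathbb{E}[\|\mathbf{d}_t - \nabla f(\mathbf{x}_t)\|^2]}$. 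Substituting the MSE bound and invoking subadditivity of the square root, $\sqrt{\theta\sigma^2 + \mathcal{O}(\cdot)} \leq \sqrt{\theta}\,\sigma + \mathcal{O}(\sigma/(\theta T^{1/4}))$, delivers the advertised $\rho[\sqrt{\theta}\sigma + \mathcal{O}(\sigma/(\theta T^{1/4}))]$-stability exactly as in Theorem~\ref{thm.salad}.

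I expect the only genuine obstacle to be the drift term under the randomized update: I must confirm that replacing $\mathbf{g}_t(\mathbf{x}_t+\bm{\epsilon}_t)$ by the cheaper $\mathbf{g}_t(\mathbf{x}_t)$ on the $R_t=0$ steps neither enlarges the per-step displacement beyond $\mathcal{O}(\eta)$ nor breaks the uniform second-moment bound on $\hat{\mathbf{g}}_t$ along the trajectory. Once the boundedness of $\mathbb{E}[\|\hat{\mathbf{g}}_t\|^2]$ is secured for both values of $R_t$ and averaged through the independence of $\{R_t\}$, everything else is a mechanical re-run of the proofs of Theorems~\ref{thm.vso} and~\ref{thm.salad}.
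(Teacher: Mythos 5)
Your proposal is correct and follows essentially the same route as the paper: the same one-step EMA decomposition of $\mathbb{E}\big[\|\mathbf{d}_t-\nabla f(\mathbf{x}_t)\|^2\big]$ into a $\theta\sigma^2$ noise term plus an $\eta^2L^2$ drift term, the same conditioning on $R_t$ to bound $\mathbb{E}\big[\|\hat{\mathbf{g}}_t\|^2\big]$ (this is precisely the paper's Lemma~\ref{apdx.lemma2-rvasso}), and the same reverse-triangle/Jensen argument for the stability claim. The one small imprecision is your remark that the gradient-norm term is ``uniformly bounded'' via Theorem~\ref{thm.sam}: that theorem only controls $\mathbb{E}\big[\|\nabla f(\mathbf{x}_t)\|^2\big]$ on average over $t$, so the drift contributes ${\cal O}(\sigma^2/\sqrt{T})$ per step rather than ${\cal O}(1/T)$ — which is exactly why the residual is ${\cal O}\big((1-\theta)^2\sigma^2/(\theta^2\sqrt{T})\big)$ as you correctly state.
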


Note that Theorem \ref{thm.r-vso} is the best one can achieve since there is no need for stochastic linearization when $R_t=0$.

\begin{figure}
\centering	
\includegraphics[width=0.47\textwidth]{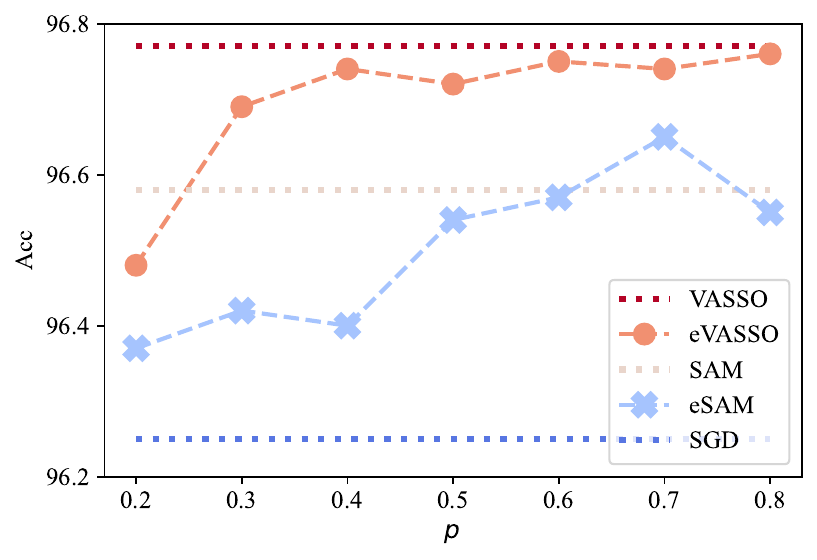}
\vspace{-0.45cm}
\caption{eVASSO improves generalization-computation tradeoff relative to eSAM.}
\label{fig.evasso}
\end{figure}

To visualize the generalization-computation tradeoff in eVASSO relative to eSAM, a ResNet-18 is trained on CIFAR-10 with $p$ chosen from $0.2$ to $0.8$. The test accuracy is plotted in Fig. \ref{fig.evasso}. It can be seen that eSAM saves computation at the price of compromised test accuracy relative to SAM. However, eVASSO effectively steers the tradeoff towards the desirable side -- less computation (small $p$) with higher accuracy. eVASSO with $p=0.3$ has already surpassed the performance of vanilla SAM and eSAM with $p=0.7$. This means that for the same test accuracy, eVaSSO can reduce the computational overhead by $70\%$ and $57\%$ relative to SAM and eSAM, respectively. 

\section{Numerical tests}\label{sec.numerical}

To support our theoretical findings and validate how powerful variance suppression is, this section assesses generalization performance of VASSO and eVASSO on various learning tasks across vision and language domains. All experiments except those in Sec. \ref{sec.dg} are run on NVIDIA V100 GPUs. Online code that is used is available at \url{https://github.com/BingcongLi/VaSSO}.
 
\subsection{Image classification}\label{sec.num.img}

\textbf{CIFAR10.} Building on the selected base optimizers such as SGD, the test accuracy of VASSO is compared with SAM and two adaptive approaches, namely ASAM and FisherSAM \citep{foret2021,kwon2021,kim2022}. 
Neural networks including VGG-11, ResNet-18, WRN-28-10 and PyramidNet-110 are trained on CIFAR10. Standard implementation including random crop, random horizontal flip, normalization and cutout \citep{cutout2017} are leveraged for data augmentation. The first three models are trained for $200$ epochs with a batchsize of $128$, and PyramidNet-110 is trained for $300$ epochs using batchsize $256$. The cosine learning rate schedule is applied in all settings. The first three models use an initial learning rate of $0.05$, and PyramidNet adopts $0.1$. Weight decay is chosen as $0.001$ for SAM, ASAM, FisherSAM and VASSO, but $0.0005$ for SGD following \citep{du2022,mi2022}. We tune $\rho$ across the values $\{0.01, 0.05, 0.1, 0.2, 0.5 \}$ for SAM; $\rho=0.1$ yields best results for ResNet and WRN; while $\rho=0.05$ and $\rho=0.2$ suit best for VGG and PyramidNet, respectively. ASAM and VASSO adopt the same $\rho$ as SAM. FisherSAM uses the recommended $\rho=0.1$ \citep{kim2022}. For VASSO, we tune $\theta =\{ 0.4, 0.9 \}$, and report the best accuracy. We find that $\theta=0.4$ works best for ResNet-18 and WRN-28-10, while $\theta=0.9$ achieves the best accuracy in other cases. 

Table \ref{tab.cifar10} shows that VASSO offers $0.2$ to $0.3$ accuracy improvement over SAM in all tested scenarios except for PyramidNet-110, where the improvement is about $0.1$. These results corroborate that generalizability indeed benefits from variance suppression and the induced stabilized adversary. 

\begin{table*}[t]
	\centering
	\normalsize
	\caption{Test accuracy (\%) of VASSO on various neural networks trained on CIFAR10.}
	\renewcommand{\arraystretch}{1.4}
	\begin{tabular}{c|ccccc}
	\toprule
	CIFAR10            & SGD                & SAM                & ASAM               & FisherSAM  & VASSO         
	 \\
	\midrule
	\textbf{VGG-11-BN}     & 93.20$_{\pm0.05}$  & 93.82$_{\pm0.05}$  & 93.47$_{\pm0.04}$  & 93.60$_{\pm0.09}$     & \textbf{94.10$_{\pm0.07}$}  \\
	\textbf{ResNet-18}   & 96.25$_{\pm0.06}$  & 96.58$_{\pm0.10}$  & 96.33$_{\pm0.09}$  & 96.72$_{\pm0.03}$   &\textbf{96.77$_{\pm0.09}$}  \\
	\textbf{WRN-28-10}   & 97.08$_{\pm0.16}$  & 97.32$_{\pm0.11}$  & 97.15$_{\pm0.05}$  & 97.46$_{\pm0.18}$   &\textbf{97.54$_{\pm0.12}$}  \\
	\textbf{PyramidNet-110}   & 97.39$_{\pm0.09}$  & 97.85$_{\pm0.14}$  & 97.56$_{\pm0.11}$  & 97.84$_{\pm0.18}$   &\textbf{97.93$_{\pm0.08}$}  \\
	\bottomrule
	\end{tabular}
	\vspace{0.2cm}
	\label{tab.cifar10}
\end{table*}

\textbf{CIFAR100.} The training setups on this dataset are the same as those on CIFAR10, except that for SAM the best choice is $\rho=0.2$. The numerical results are listed in Table \ref{tab.cifar100}. It can be seen that SAM gains considerably in generalization over SGD, and this gain is further amplified by VASSO. On all tested models, VASSO improves the test accuracy of SAM by $0.2$ to $0.3$. These experiments once again confirm the improved generalization of VASSO thanks to the stabilized adversary.

\textbf{ImageNet.} Next, we investigate the performance of VASSO on larger scale experiments by training ResNet-50 and ViT-S/32 on ImageNet \citep{imagenet2009}. Implementation details are deferred to Appendix \ref{apdx.sec.numerical}. Note that the baseline optimizer is SGD for ResNet and AdamW for ViT \citep{kingma2014, loshchilov2017adamw}. VASSO is also integrated with GSAM (V+G) to demonstrate that variance suppression also benefits other SAM variants such as the one in \citep{zhuang2022}. For ResNet-50, it can be observed that vanilla VASSO outperforms other SAM variants, and offers a gain of $0.26$ over SAM. V+G showcases the best performance with a gain of $0.28$ on top of GSAM. VASSO and V+G also exhibit the best test accuracy on ViT-S/32, where VASSO improves SAM by 0.56 and V+G outperforms GSAM by 0.19. These numerical improvement demonstrates that stability of adversaries is indeed desirable.

\begin{table*}[t]
	\centering
	\normalsize
	\caption{Test accuracy (\%) of VASSO on various neural networks trained on CIFAR100.}
	\renewcommand{\arraystretch}{1.4}
	\begin{tabular}{c|ccccc}
	\toprule
	CIFAR100             & SGD                & SAM                & ASAM               & FisherSAM  & VASSO                       \\
	\midrule 
	\textbf{ResNet-18}   & 77.90$_{\pm0.07}$  & 80.96$_{\pm0.12}$  & 79.91$_{\pm0.04}$  & 80.99$_{\pm0.13}$ &\textbf{81.30}$_{\pm0.13}$  \\
	\textbf{WRN-28-10}   & 81.71$_{\pm0.13}$  & 84.88$_{\pm0.10}$  & 83.54$_{\pm0.14}$  & 84.91$_{\pm0.07}$ & \textbf{85.06}$_{\pm0.05}$  \\
	\textbf{PyramidNet-110}   & 83.50$_{\pm0.12}$  & 85.60$_{\pm0.11}$  & 83.72$_{\pm0.09}$  & 85.55$_{\pm0.14}$ & \textbf{85.85}$_{\pm0.09}$  \\
	\bottomrule
	\end{tabular}
	\vspace{0.2cm}
	\label{tab.cifar100}
\end{table*}

\begin{table*}[t]
	\centering
	\normalsize
	\caption{Test accuracy (\%) of VASSO on ImageNet, where V+G is short for VASSO + GSAM.}
	\renewcommand{\arraystretch}{1.4}
	\begin{tabular}{c|cccccc}
	\toprule
	          ~ImageNet            & vanilla      & SAM              & ASAM     & GSAM   & VASSO      &  V+G                   \\
	\midrule 
	\textbf{ResNet-50}   & 76.62$_{\pm0.12}$  & 77.16$_{\pm0.14}$ & 77.10$_{\pm0.16}$  &  77.20$_{\pm0.13}$  & \textbf{77.42}$_{\pm0.13}$  & \textbf{77.48}$_{\pm0.04}$ \\
	\textbf{ViT-S/32}    & 68.12$_{\pm0.05}$  & 68.98$_{\pm0.08}$ &  68.74$_{\pm0.11}$  &   69.42$_{\pm0.18}$  & \textbf{69.54}$_{\pm0.15}$  & \textbf{69.61}$_{\pm0.11}$ \\
	\bottomrule
	\end{tabular}
	\vspace{0.2cm}
	\label{tab.imagenet}
\end{table*}

\subsection{Domain generalization}\label{sec.dg}

\begin{table}[t]
	\centering
	\caption{Leave-one-out cross-validation accuracy (\%) of VASSO on DomainBed.}
	\renewcommand{\arraystretch}{1.4}
	\normalsize
	\begin{tabular}{c|cccc}
	\toprule
	\specialrule{0em}{2pt}{2pt}
	DomainBed  & Adam             & SAM              & GSAM               & VASSO    \\
	\midrule 
 	PACS       & 85.5$_{\pm0.2}$  & 85.8$_{\pm0.2}$  & 85.9$_{\pm0.1}$  &  \textbf{86.0}$_{\pm0.1}$ \\
 	VLCS       & 77.3$_{\pm0.4}$  & 79.4$_{\pm0.1}$  & 79.1$_{\pm0.2}$           &  \textbf{79.6}$_{\pm0.2}$  \\
	OfficeHome & 66.5$_{\pm0.3}$  & 69.6$_{\pm0.1}$  & 69.3$_{\pm0.0}$    &  \textbf{69.8}$_{\pm0.2}$ \\
	TerraInc   & 46.1$_{\pm1.8}$  & 43.3$_{\pm0.7}$  & \textbf{47.0}$_{\pm0.8}$    &  \textbf{47.0}$_{\pm0.3}$ \\
	\midrule 
	Average   & 68.9  & 69.5  & 70.3  & \textbf{70.6}  \\
	\bottomrule
	\end{tabular}
	\vspace{0.2cm}
 	\label{tab.domain}
\end{table}

This subsection evaluates VASSO's efficiency on domain generalization (DG) tasks. 
The goal of DG is to facilitate seamless transfer of learned knowledge from a ``source domain'' to generalize well on unseen yet related ``target domains'' \cite{zhou2022domain,oza2023}. The major challenge is the distributional shift across these domains, and recent studies have demonstrated that a flat minimum is beneficial to the desirable generalizability  \citep{wang2023}. 

Numerical tests of VASSO are conducted on DomainBed, which is a well-established benchmark \citep{gulrajani2022}. We focus on 4 specific datasets: PACS \citep{Li_2017_ICCV}, VLCS \citep{Fang_2013_ICCV}, OfficeHome \citep{Venkateswara_2017_CVPR}, and TerraInconita \citep{Beery_2018_ECCV}; see Table \ref{tab.domain-summary} for details. The performance is evaluated using standard leave-one-out cross-validation \citep{gulrajani2022}, and this experiment is run on a NVIDIA A5000 GPU.

Following \citep{gulrajani2022}, a ResNet-50 imported from pytorch and pretrained on ImageNet is adopted as the backbone. We use a pretrained checkpoint on purpose -- to demonstrate that VASSO can enhance the generalization even for finetuning. Adam is adopted as the base optimizer, and hyperparamters are chosen the same as in \citep{gulrajani2022,Wang_2023_CVPR}. For VASSO, $\theta$ takes values from $\{0.2, 0.4, 0.9\}$.

Table \ref{tab.domain} compares VASSO with Adam, SAM, and GSAM, where the results of the last three are obtained from \citep{Wang_2023_CVPR}. Detailed per-domain results can be found in Appendix \ref{apdx.sec.domainbed}. It is observed that VASSO surpasses all these competitors in terms of averaged accuracy, underscoring the benefit of variance suppression in the scenario of distributional shift across diverse domains.

\subsection{Label noise}

\begin{table*}[t]
	\centering
	\normalsize
	\renewcommand{\arraystretch}{1.4}
	\caption{Test accuracy (\%) of VASSO on CIFAR10 under different levels of label noise.}
	\tabcolsep=0.35cm
	\begin{tabular}{l|cccc}
	\toprule
	                           & \multirow{2}{*}{SAM}   & VASSO            & VASSO                  & VASSO\\
	                           &                        & ($\theta=0.9$)   & ($\theta=0.4$)        & ($\theta=0.2$)
	 \\
	\midrule
	\textbf{25\% label noise}        & 96.39$_{\pm0.12}$  & 96.36$_{\pm0.11}$    & 96.42$_{\pm0.12}$    		 & \textbf{96.48}$_{\pm0.09}$ \\
	\textbf{50\% label noise}        & 93.93$_{\pm0.21}$  & 94.00$_{\pm0.24}$    & 94.63$_{\pm0.21}$    & \textbf{94.93}$_{\pm0.16}$ \\
	\textbf{75\% label noise}        & 75.36$_{\pm0.42}$  & 77.40$_{\pm0.37}$    & 80.94$_{\pm0.40}$    & \textbf{85.02}$_{\pm0.39}$ \\
	\bottomrule
	\end{tabular}
	\vspace{0.2cm}
	\label{tab.lbnoise}
\end{table*}

\begin{table*}[t]
	\centering
	\normalsize
	\renewcommand{\arraystretch}{1.4}
	\caption{Performance of VASSO for training a Transformer on IWSLT-14 dataset.}
	\tabcolsep=0.33cm
	\begin{tabular}{c|ccccc}
	\toprule
	                           & \multirow{2}{*}{AdamW}      & \multirow{2}{*}{SAM}  & \multirow{2}{*}{ASAM} & VASSO             & VASSO  \\
	                           &                           &                       &                       & ($\theta=0.9$)    & ($\theta=0.4$)
	 \\
	\midrule
	val. ppl. & 5.02$_{\pm0.03}$    & 5.00$_{\pm0.04}$     & \textbf{4.99}$_{\pm0.03}$   & 5.00$_{\pm0.03}$  & \textbf{4.99}$_{\pm0.03}$ \\
	BLEU      & 34.66$_{\pm0.06}$    & 34.75$_{\pm0.04}$      & 34.76$_{\pm0.04}$    & 34.81$_{\pm0.04}$   &  \textbf{34.88}$_{\pm0.03}$  \\
	\bottomrule
	\end{tabular}
	\vspace{0.2cm}
	\label{tab.nmt}
\end{table*}

It is known that SAM holds great potential to harness robustness to DNNs under the appearance of label noise in training data \citep{foret2021,jiang2023dyn}. As the training loss landscape is largely perturbed by the label noise, this is a setting where the suppressed variance and stabilized adversaries are expected to improve performance. In our experiments, VASSO's performance is evaluated when a fraction of the training labels are randomly flipped. With $\theta$ taking $\{ 0.9, 0.4, 0.2 \}$ values, the corresponding test accuracies are listed in Table \ref{tab.lbnoise}.

Our first observation is that VASSO outperforms SAM at all three levels of label noise tested. VASSO elevates generalization improvement as the percentage of noisy labels grows. For $75\%$ noisy labels,  VASSO with $\theta=0.4$ outperforms SAM with an absolute improvement of $5.58$, while VASSO with $\theta=0.2$ markedly improves SAM by $9.66$. In all setups, $\theta=0.2$ leads to best performance while $\theta=0.9$ exhibits the worst generalization when compared with VASSO. When fixing the chosen $\theta$, e.g., $\theta=0.2$, VASSO has larger absolute accuracy improvement over SAM under higher level of label noise. These observations are in agreement with Theorem \ref{thm.salad}, which predicts that VASSO is suitable for settings with larger label noise due to its enhanced stability, especially when $\theta$ is chosen small (but not too small).

\subsection{Neural machine translation}

Having demonstrated the benefits of variance suppression on vision tasks, here we will test VASSO on German to English translation using a Transformer \citep{vaswani2017attention} trained on IWSLT-14 dataset \citep{iwslt2014}, and the fairseq implementation. AdamW is chosen as the base optimizer in SAM and VASSO because of its improved performance over SGD. The learning rate of AdamW is initialized to $5\times 10^{-4}$, and then follows an inverse square root schedule. For momentum, we choose $\beta_1=0.9$ and $\beta_2=0.98$. Label smoothing is also applied with a rate of $0.1$. Hyperparameter $\rho$ is tuned for SAM from $\{ 0.01, 0.05, 0.1, 0.2\}$, and $\rho=0.1$ performs the best. The same $\rho$ is picked for ASAM and VASSO.

The validation perplexity and test BLEU scores are listed in Table \ref{tab.nmt}. It can be seen that both SAM and ASAM have better performance on validation perplexity and BLEU relative to AdamW. Although VASSO with $\theta=0.9$ has slightly higher validation perplexity, its BLEU score outperforms SAM and ASAM. VASSO with $\theta=0.4$ showcases the best generalization performance on this task, providing a $0.22$ improvement on BLEU score relative to AdamW. This aligns with Theorems \ref{thm.vso} and \ref{thm.salad}, which suggest that a small $\theta$ is more beneficial to the stability of adversaries.

\subsection{Generalization-computation tradeoff}\label{sec.evasso}


As previously illustrated in Fig. \ref{fig.evasso}, eVASSO improves the generalization-computation tradeoff. This subsection offers further supporting evidence to this end. In particular, the proposed eVASSO with $p=0.5$ and $p=0.3$ is tested on ResNet-18 and WRN-28-10 using datasets CIFAR10 and CIFAR100. The test accuracy vs runtime of eVASSO, and benchmark algorithms such as SGD, SAM and eSAM can be found in Table \ref{tab.rvasso}. Note that for eVASSO, we tune $\theta$ from $\{0.4 , 0.9 \}$, and report the best results.


As expected, the excess runtime of eVASSO and eSAM relative to SGD is roughly proportional to $p$. Another common observation is that a smaller $p$ degrades generalization, demonstrating the tradeoff pattern.

\textbf{CIFAR10.} Compared with SAM, eVASSO achieves better and comparable test accuracy on ResNet-18 and WRN-28-10, respectively, while requiring significantly reduced training time. On the other hand, eVASSO utilizes similar computational resources to reach markedly improved test accuracy over eSAM, because of the suppressed variance for finding adversaries. 

\textbf{CIFAR100.} On ResNet-18, the test accuracy of eVASSO with $p=0.5$ outperforms SAM, and with $p=0.3$ it is comparable to SAM. While on WRN-28-10, although eVASSO cannot match with SAM, it significantly outperforms eSAM and SGD. This suggests that even with minimally increased computation time (e.g., 1.3x), it is possible to improve the generalization over SGD drastically by $2.3$.

\begin{table*}[t]
    \centering
    \caption{Test accuracy and epoch runtime of eVASSO on CIFAR-10 and CIFAR-100 datasets. The numbers in parentheses ($\cdot$) indicate the ratio of training speed relative to SGD.}
   \normalsize
   \renewcommand{\arraystretch}{1.3}
    \begin{tabular}{c|cc|cc}
    \toprule
         & \multicolumn{2}{c|}{\textbf{ResNet-18}} & \multicolumn{2}{c}{ \textbf{WRN-28-10}} \\
\midrule
     CIFAR-10 &   Accuracy & epoch runtime (s)  & Accuracy & epoch runtime (s) \\
            \midrule
        SGD & 96.25$_{\pm 0.05}$  & 14.42 (1x)       & 97.08$_{\pm 0.16}$ & 80.58 (1x) \\
        SAM & 96.58$_{\pm 0.10}$  & 27.62 (1.92x)    & 97.32$_{\pm 0.11}$ & 156.92 (1.95x) \\
        eSAM ($p=0.5$) & 96.54$_{\pm 0.11}$    & 21.57 (1.49x) & 97.21$_{\pm 0.08}$  & 119.06  (1.48x)  \\
        \rowcolor{lightgray}
        eVASSO ($p=0.5$) & 96.72$_{\pm 0.14}$  & 21.94 (1.52x)    & 97.36$_{\pm 0.11}$  & 120.11  (1.49x)  \\
        eSAM ($p=0.3$) & 96.42$_{\pm 0.08}$ & 18.34 (1.27x)       & 97.20$_{\pm 0.11}$ & 105.63  (1.31x)  \\
        \rowcolor{lightgray}
        eVASSO ($p=0.3$) & 96.69$_{\pm 0.06}$   & 18.61 ((1.29x)   & 97.29$_{\pm 0.16}$ & 105.99  (1.32x)  \\
	\midrule
     CIFAR-100 &   Accuracy & epoch runtime (s)  & Accuracy & epoch runtime (s) \\
            \midrule
        SGD & 77.90$_{\pm 0.07}$  & 14.87 (1x)       & 81.71$_{\pm 0.13}$ & 81.02 (1x) \\
        SAM & 80.96$_{\pm 0.12}$  & 28.37 (1.91x)    & 84.88$_{\pm 0.10}$  & 157.44 (1.94x) \\
        eSAM ($p=0.5$) & 81.03$_{\pm 0.17}$   & 21.77 (1.46x)     & 84.31$_{\pm 0.15}$ & 120.28 (1.48x)  \\
        \rowcolor{lightgray}
        eVASSO ($p=0.5$) & 81.20$_{\pm 0.15}$  & 22.09 (1.49x)    & 84.52$_{\pm 0.06}$ & 121.55  (1.49x)  \\
        eSAM ($p=0.3$) & 80.60$_{\pm 0.12}$ & 19.32 (1.30x)       & 83.65$_{\pm 0.18}$ & 105.41 (1.30x)  \\
        \rowcolor{lightgray}
        eVASSO ($p=0.3$) & 80.84$_{\pm 0.06}$  & 19.35 ((1.30x)   & 84.07$_{\pm 0.08}$  & 106.61 (1.30x) \\        \bottomrule
    \end{tabular}
    \label{tab.rvasso}
\end{table*}

\subsection{Additional tests}

\begin{figure}[t]
	\centering
	\begin{tabular}{cc}
	\vspace{-0.3cm}
		\hspace{-0.2cm}
		\includegraphics[width=.22\textwidth]{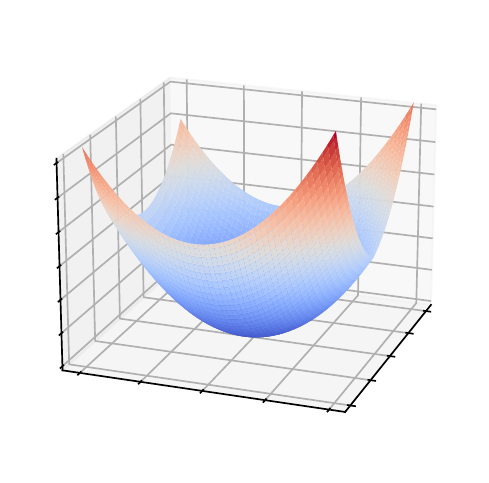}&
		\includegraphics[width=.22\textwidth]{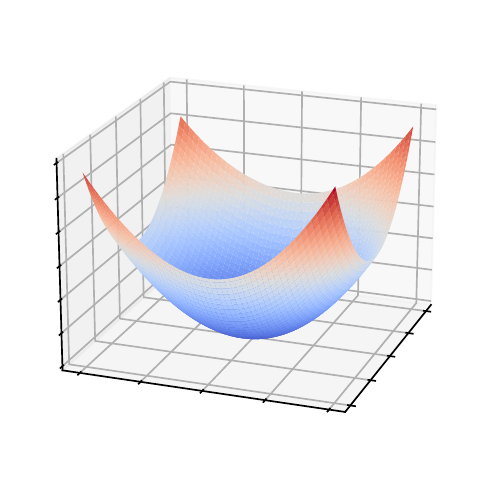}\\
		(a) SGD &  (b) SAM \\
		\hspace{-0.2cm}
		\includegraphics[width=.22\textwidth]{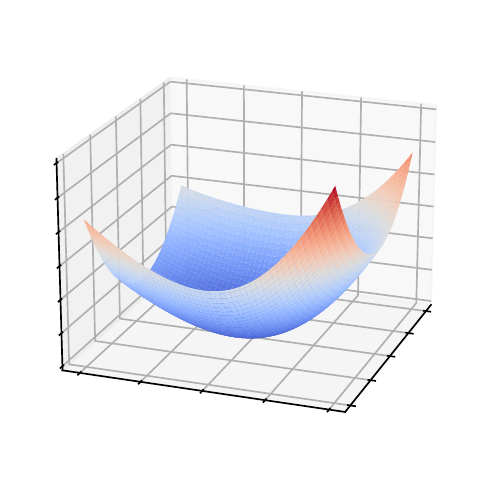}&
		\includegraphics[width=.22\textwidth]{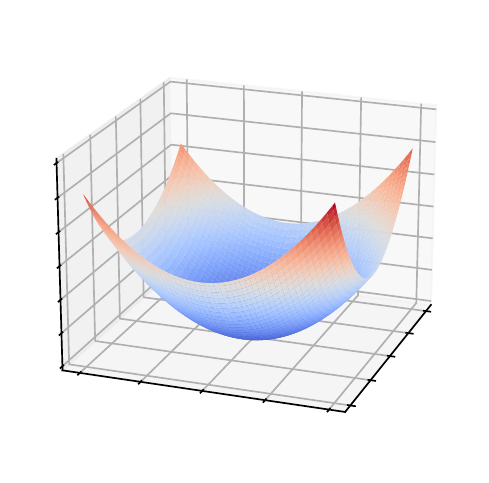}
		\\ 
		\hspace{-0.15cm}
	   (c) VASSO & (d) eVASSO
	\end{tabular}
	\caption{Visualization of loss landscapes.}
	 \label{fig.landscape}
	 \vspace{-0.3cm}
\end{figure}

This subsection evaluates several flatness-related metrics using a ResNet-18 trained on CIFAR10, where the hyperparameters are chosen as those in Sections \ref{sec.num.img} and \ref{sec.evasso}. 

\begin{table}[t]
	\centering
	\normalsize
	\renewcommand{\arraystretch}{1.4}
	\caption{Hessian spectrum of a ResNet-18.}
	\tabcolsep=0.34cm
	\begin{tabular}{c|cccc}
	\toprule
	                        & SGD    & SAM    & VASSO         & eVASSO               \\
	\midrule 
	$\lambda_1$          & 82.52  & 26.40  & \textbf{23.32}   &  25.21  \\
	$\lambda_1/\lambda_5$   & 16.63 &  2.12   & \textbf{1.86} &  2.03  \\
	\bottomrule
	\end{tabular}
	\vspace{-0.1cm}
	\label{tab.spectra}
\end{table}

\textbf{Hessian spectrum.}
We focus on the largest eigenvalue $\lambda_1$ and the ratio of the largest to the fifth largest eigenvalue $\lambda_1/\lambda_5$. These metrics are also adopted by \citep{foret2021,stanislaw2020} to reflect the flatness of the solution, where smaller numbers are more preferable. Because exact calculation of the Hessian spectrum is prohibitive given the ResNet-18 size, we instead leverage Lanczos' approximation algorithm~\citep{behrooz2019}. Table \ref{tab.spectra} shows that SAM indeed converges to a much flatter solution compared with SGD, and VASSO further improves upon SAM. This confirms that the friendly adversary effect is indeed alleviated by variance suppression, which in turn boosts the generalization of ResNet18 as shown earlier in Section \ref{sec.num.img}. Moreover, the flatness measure of eVASSO with $p=0.5$ is slightly worse compared with VASSO. This is a consequence of the computation-generalization tradeoff that is present in eVASSO.

\textbf{Landscape.} The loss landscapes optimized by the proposed approaches are plotted in Fig. \ref{fig.landscape} following the visualization method in \cite{li2018visual}. It can be seen that VASSO and eVASSO ($p=0.5$) indeed find flatter valleys relative to SAM and SGD, in agreement with their improved generalization behavior. 

\section{Other related works}
This section completes the context of DNN generalizability with related works.

\textbf{Sharpness and generalization.} Since the study of \cite{keskar2016}, the relation between sharpness and generalization has been investigated intensively. It has been observed that sharpness is closely correlated with the ratio between learning rate and batchsize in SGD \citep{Jastrzebski2018}. Analytical results on the generalization error using sharpness-related measures can be found in e.g., \citep{dziugaite2017,behnam2017,wang2022}. These works justify the goal of seeking a flatter valley to enhance generalizability. Aiming at a flatter minimum, approaches other than SAM have been also developed. For example, stochastic weight averaging has been advocated for DNNs~\cite{izmailov2018}, while \cite{wu2020} has put forth an algorithm similar to SAM and has placed emphasis on robustness to adversarial training.

\textbf{SAM variants.} Besides GSAM and ASAM that have been already mentioned,  \citep{yang2022} pursued a SAM variant by penalizing the gradient norm on the premise that sharper valleys tend to have gradients with larger norm. In addition, \cite{barrett2020implicit} arrived at a similar conclusion by analyzing gradient flow. Exploiting multiple (ascent) steps to find an adversary has been studied systematically~\citep{kim2023multi}. However, these works overlook the friendly adversary issue, while VASSO provides algorithmic possibilities for generalization benefits by stabilizing their adversaries. 

\section{Concluding remarks}

This contribution demonstrates that stabilizing adversaries through variance suppression consolidates the generalization merits of sharpness-aware minimization. The novel framework abbreviated as VASSO, induces provable stability for SAM's stochastic linearization. VASSO's theoretically established merits are demonstrated with a suit of numerical experiments, and illustrate model-agnostic improvement over SAM on various vision and language tasks. Lastly, VASSO also steers the generalization-computation tradeoff of SAM towards the optimal side -- improved generalization (over SGD) with significantly reduced computation (relative to SAM). 

\bibliographystyle{IEEEtranN}
\bibliography{myabrv,datactr}

\begin{thebibliography}{67}
\providecommand{\natexlab}[1]{#1}
\providecommand{\url}[1]{#1}
\csname url@samestyle\endcsname
\providecommand{\newblock}{\relax}
\providecommand{\bibinfo}[2]{#2}
\providecommand{\BIBentrySTDinterwordspacing}{\spaceskip=0pt\relax}
\providecommand{\BIBentryALTinterwordstretchfactor}{4}
\providecommand{\BIBentryALTinterwordspacing}{\spaceskip=\fontdimen2\font plus
\BIBentryALTinterwordstretchfactor\fontdimen3\font minus \fontdimen4\font\relax}
\providecommand{\BIBforeignlanguage}[2]{{%
\expandafter\ifx\csname l@#1\endcsname\relax
\typeout{** WARNING: IEEEtranN.bst: No hyphenation pattern has been}%
\typeout{** loaded for the language `#1'. Using the pattern for}%
\typeout{** the default language instead.}%
\else
\language=\csname l@#1\endcsname
\fi
#2}}
\providecommand{\BIBdecl}{\relax}
\BIBdecl

\bibitem[Devlin et~al.(2018)Devlin, Chang, Lee, and Toutanova]{devlin2018bert}
J.~Devlin, M.-W. Chang, K.~Lee, and K.~Toutanova, ``Bert: Pre-training of deep bidirectional transformers for language understanding,'' \emph{arXiv:1810.04805}, 2018.

\bibitem[Tom et~al.(2020)Tom, Benjamin, Nick, Melanie, Jared, Prafulla, Arvind, Pranav, Girish, Amanda, Sandhini, Ariel, Gretchen, Tom, Rewon, Aditya, M., Jeffrey, Clemens, Christopher, Mark, Eric, Mateusz, Scott, Benjamin, Jack, Christopher, Sam, Alec, Ilya, and Dario]{gpt3}
B.~Tom, M.~Benjamin, R.~Nick, S.~Melanie, K.~Jared, D.~Prafulla, N.~Arvind, S.~Pranav, S.~Girish, A.~Amanda, A.~Sandhini, H.-V. Ariel, K.~Gretchen, H.~Tom, C.~Rewon, R.~Aditya, Z.~D. M., W.~Jeffrey, W.~Clemens, H.~Christopher, C.~Mark, S.~Eric, L.~Mateusz, G.~Scott, C.~Benjamin, C.~Jack, B.~Christopher, M.~Sam, R.~Alec, S.~Ilya, and A.~Dario, ``Language models are few-shot learners,'' in \emph{Proc. Adv. Neural Info. Processing Systems}, vol.~33, 2020, pp. 1877--1901.

\bibitem[Zhang et~al.(2021{\natexlab{a}})Zhang, Bengio, Hardt, Recht, and Vinyals]{zhang2016}
C.~Zhang, S.~Bengio, M.~Hardt, B.~Recht, and O.~Vinyals, ``Understanding deep learning (still) requires rethinking generalization,'' \emph{Communications of the ACM}, vol.~64, no.~3, pp. 107--115, 2021.

\bibitem[Srivastava et~al.(2014)Srivastava, Hinton, Krizhevsky, Sutskever, and Salakhutdinov]{dropout2014}
N.~Srivastava, G.~E. Hinton, A.~Krizhevsky, I.~Sutskever, and R.~Salakhutdinov, ``Dropout: a simple way to prevent neural networks from overfitting,'' \emph{J. Mach. Learn. Res.}, vol.~15, pp. 1929--1958, 2014.

\bibitem[Chen et~al.(2022)Chen, Hsieh, and Gong]{sam4vit}
X.~Chen, C.-J. Hsieh, and B.~Gong, ``When vision transformers outperform resnets without pre-training or strong data augmentations,'' in \emph{Proc. Int. Conf. Learning Represention}, 2022.

\bibitem[Wilson et~al.(2017)Wilson, Roelofs, Stern, Srebro, Recht, and Srebro]{wilson2017}
A.~C. Wilson, R.~Roelofs, M.~Stern, N.~Srebro, B.~Recht, and N.~Srebro, ``The marginal value of adaptive gradient methods in machine learning.'' in \emph{Proc. Int. Conf. Machine Learning}, vol.~30, 2017, pp. 4148--4158.

\bibitem[Loshchilov and Hutter(2017)]{loshchilov2017adamw}
I.~Loshchilov and F.~Hutter, ``Decoupled weight decay regularization,'' in \emph{Proc. Int. Conf. Learning Represention}, 2017.

\bibitem[Foret et~al.(2021)Foret, Kleiner, Mobahi, and Neyshabur]{foret2021}
P.~Foret, A.~Kleiner, H.~Mobahi, and B.~Neyshabur, ``Sharpness-aware minimization for efficiently improving generalization,'' in \emph{Proc. Int. Conf. Learning Represention}, 2021.

\bibitem[Jiang et~al.(2019)Jiang, Neyshabur, Krishnan, Mobahi, and Bengio]{jiang2020}
Y.~Jiang, B.~Neyshabur, D.~Krishnan, H.~Mobahi, and S.~Bengio, ``Fantastic generalization measures and where to find them,'' \emph{arXiv:1912.02178}, 2019.

\bibitem[Keskar et~al.(2016)Keskar, Mudigere, Nocedal, Smelyanskiy, and Tang]{keskar2016}
N.~S. Keskar, D.~Mudigere, J.~Nocedal, M.~Smelyanskiy, and P.~T.~P. Tang, ``On large-batch training for deep learning: Generalization gap and sharp minima.'' in \emph{Proc. Int. Conf. Learning Represention}, 2016.

\bibitem[Chaudhari et~al.(2017)Chaudhari, Choromanska, Soatto, LeCun, Baldassi, Borgs, Chayes, Sagun, and Zecchina]{pratik2017}
P.~Chaudhari, A.~Choromanska, S.~Soatto, Y.~LeCun, C.~Baldassi, C.~Borgs, J.~Chayes, L.~Sagun, and R.~Zecchina, ``Entropy-{SGD}: {B}iasing gradient descent into wide valleys,'' in \emph{Proc. Int. Conf. Learning Represention}, 2017.

\bibitem[Zhang et~al.(2022)Zhang, Luo, Su, and Sun]{gasam2022}
Z.~Zhang, R.~Luo, Q.~Su, and X.~Sun, ``{GA-SAM}: Gradient-strength based adaptive sharpness-aware minimization for improved generalization,'' in \emph{Proc. Conf. Empirical Methods in Natural Language Processing}, 2022.

\bibitem[Andriushchenko and Flammarion(2022)]{maksym2022}
M.~Andriushchenko and N.~Flammarion, ``Towards understanding sharpness-aware minimization.'' in \emph{Proc. Int. Conf. Machine Learning}, 2022, pp. 639--668.

\bibitem[Wen et~al.(2023)Wen, Ma, and hiyuan Li]{wen2023}
K.~Wen, T.~Ma, and Z.~hiyuan Li, ``How does sharpness-aware minimization minimizes sharpness,'' in \emph{Proc. Int. Conf. Learning Represention}, 2023.

\bibitem[Bartlett et~al.(2022)Bartlett, Long, and Bousquet]{bartlett2022dynamics}
P.~L. Bartlett, P.~M. Long, and O.~Bousquet, ``The dynamics of sharpness-aware minimization: Bouncing across ravines and drifting towards wide minima,'' \emph{arXiv:2210.01513}, 2022.

\bibitem[Zhuang et~al.(2022)Zhuang, Gong, Yuan, Cui, Adam, Dvornek, Tatikonda, Duncan, and Liu]{zhuang2022}
J.~Zhuang, B.~Gong, L.~Yuan, Y.~Cui, H.~Adam, N.~Dvornek, S.~Tatikonda, J.~Duncan, and T.~Liu, ``Surrogate gap minimization improves sharpness-aware training.'' in \emph{Proc. Int. Conf. Learning Represention}, 2022.

\bibitem[Kim et~al.(2022)Kim, Li, Hu, and Hospedales]{kim2022}
M.~Kim, D.~Li, S.~X. Hu, and T.~M. Hospedales, ``Fisher {SAM}: Information geometry and sharpness aware minimisation.'' in \emph{Proc. Int. Conf. Machine Learning}, 2022, pp. 11\,148--11\,161.

\bibitem[Liu et~al.(2022)Liu, Mai, Chen, Hsieh, and You]{liu2022}
Y.~Liu, S.~Mai, X.~Chen, C.-J. Hsieh, and Y.~You, ``Towards efficient and scalable sharpness-aware minimization,'' in \emph{Proc. Conf. Computer Vision and Pattern Recognition}, vol. 2022, 2022, pp. 12\,350--12\,360.

\bibitem[Du et~al.(2022{\natexlab{a}})Du, Yan, Feng, Zhou, Zhen, Goh, and Tan]{du2022}
J.~Du, H.~Yan, J.~Feng, J.~T. Zhou, L.~Zhen, R.~S.~M. Goh, and V.~Y.~F. Tan, ``Efficient sharpness-aware minimization for improved training of neural networks,'' in \emph{Proc. Int. Conf. Learning Represention}, 2022.

\bibitem[Zhao et~al.(2022{\natexlab{a}})Zhao, Zhang, and Hu]{zhao222}
Y.~Zhao, H.~Zhang, and X.~Hu, ``{SS-SAM}: Stochastic scheduled sharpness-aware minimization for efficiently training deep neural networks,'' \emph{arXiv:2203.09962}, 2022.

\bibitem[Jiang et~al.(2023{\natexlab{a}})Jiang, Yang, Zhang, and Kwok]{jiang2023}
W.~Jiang, H.~Yang, Y.~Zhang, and J.~Kwok, ``An adaptive policy to employ sharpness-aware minimization,'' \emph{arXiv:2304.14647}, 2023.

\bibitem[Du et~al.(2022{\natexlab{b}})Du, Zhou, Feng, Tan, and Zhou]{du2022saf}
J.~Du, D.~Zhou, J.~Feng, V.~Y.~F. Tan, and J.~T. Zhou, ``Sharpness-aware training for free,'' in \emph{Proc. Adv. Neural Info. Processing Systems}, 2022.

\bibitem[Li and Giannakis(2023)]{li2023vasso}
B.~Li and G.~B. Giannakis, ``Enhancing sharpness-aware optimization through variance suppression,'' in \emph{Proc. Adv. Neural Info. Processing Systems}, 2023.

\bibitem[Ghadimi and Lan(2013)]{ghadimi2013}
S.~Ghadimi and G.~Lan, ``Stochastic first-and zeroth-order methods for nonconvex stochastic programming,'' \emph{SIAM Journal on Optimization}, vol.~23, no.~4, pp. 2341--2368, 2013.

\bibitem[Bottou et~al.(2016)Bottou, Curtis, and Nocedal]{bottou2018}
L.~Bottou, F.~E. Curtis, and J.~Nocedal, ``Optimization methods for large-scale machine learning,'' \emph{arXiv preprint arXiv:1606.04838}, 2016.

\bibitem[Mi et~al.(2022)Mi, Shen, Ren, Zhou, Sun, Ji, and Tao]{mi2022}
P.~Mi, L.~Shen, T.~Ren, Y.~Zhou, X.~Sun, R.~Ji, and D.~Tao, ``Make sharpness-aware minimization stronger: A sparsified perturbation approach,'' in \emph{Proc. Adv. Neural Info. Processing Systems}, 2022.

\bibitem[Reddi et~al.(2016)Reddi, Sra, P{\'o}czos, and Smola]{reddi2016}
S.~J. Reddi, S.~Sra, B.~P{\'o}czos, and A.~Smola, ``Stochastic {F}rank-{W}olfe methods for nonconvex optimization,'' in \emph{Allerton conference on communication, control, and computing}.\hskip 1em plus 0.5em minus 0.4em\relax IEEE, 2016, pp. 1244--1251.

\bibitem[Vaswani et~al.(2017)Vaswani, Shazeer, Parmar, Uszkoreit, Jones, Gomez, Kaiser, and Polosukhin]{vaswani2017attention}
A.~Vaswani, N.~Shazeer, N.~Parmar, J.~Uszkoreit, L.~Jones, A.~N. Gomez, {\L}.~Kaiser, and I.~Polosukhin, ``Attention is all you need,'' in \emph{Proc. Adv. Neural Info. Processing Systems}, vol.~30, 2017.

\bibitem[Johnson and Zhang(2013)]{johnson2013}
R.~Johnson and T.~Zhang, ``Accelerating stochastic gradient descent using predictive variance reduction,'' in \emph{Proc. Advances in Neural Info. Process. Syst.}, Lake Tahoe, Nevada, 2013, pp. 315--323.

\bibitem[Nguyen et~al.(2017)Nguyen, Liu, Scheinberg, and Tak{\'a}{\v{c}}]{nguyen2017}
L.~M. Nguyen, J.~Liu, K.~Scheinberg, and M.~Tak{\'a}{\v{c}}, ``{SARAH}: A novel method for machine learning problems using stochastic recursive gradient,'' in \emph{Proc. Int. Conf. Machine Learning}, Sydney, Australia, 2017.

\bibitem[Li et~al.(2020)Li, Wang, and Giannakis]{li2019bb}
B.~Li, L.~Wang, and G.~B. Giannakis, ``Almost tune-free variance reduction,'' in \emph{Proc. Intl. Conf. on Machine Learning}, 2020.

\bibitem[Li et~al.(2019)Li, Ma, and Giannakis]{li2019l2s}
B.~Li, M.~Ma, and G.~B. Giannakis, ``On the convergence of {SARAH} and beyond,'' in \emph{Proc. Int. Conf. Artif. Intel. and Stats.}, 2019.

\bibitem[Mardia and Jupp(2000)]{mardia2000directional}
K.~V. Mardia and P.~E. Jupp, \emph{Directional Statistics}.\hskip 1em plus 0.5em minus 0.4em\relax Directional statistics, 2000.

\bibitem[Kwon et~al.(2021)Kwon, Kim, Park, and Choi]{kwon2021}
J.~Kwon, J.~Kim, H.~Park, and I.~K. Choi, ``A{SAM}: Adaptive sharpness-aware minimization for scale-invariant learning of deep neural networks,'' in \emph{Proc. Int. Conf. Machine Learning}, vol. 139, 2021, pp. 5905--5914.

\bibitem[Devries and Taylor(2017)]{cutout2017}
T.~Devries and G.~W. Taylor, ``Improved regularization of convolutional neural networks with cutout.'' vol. abs/1708.04552, 2017.

\bibitem[Deng et~al.(2009)Deng, Dong, Socher, Li, Li, and Fei-Fei]{imagenet2009}
J.~Deng, W.~Dong, R.~Socher, L.-J. Li, K.~Li, and L.~Fei-Fei, ``Image{N}et: A large-scale hierarchical image database,'' in \emph{Proc. Conf. Computer Vision and Pattern Recognition}, 2009, pp. 248--255.

\bibitem[Kingma and Ba(2014)]{kingma2014}
D.~P. Kingma and J.~Ba, ``Adam: A method for stochastic optimization,'' in \emph{Proc. Int. Conf. Learning Represention}, 2014.

\bibitem[Zhou et~al.(2022)Zhou, Liu, Qiao, Xiang, and Loy]{zhou2022domain}
K.~Zhou, Z.~Liu, Y.~Qiao, T.~Xiang, and C.~C. Loy, ``Domain generalization: A survey,'' \emph{IEEE Trans. Pattern Analysis and Machine Intelligence}, 2022.

\bibitem[Oza et~al.(2023)Oza, Sindagi, Sharmini, and Patel]{oza2023}
P.~Oza, V.~A. Sindagi, V.~V. Sharmini, and V.~M. Patel, ``Unsupervised domain adaptation of object detectors: A survey,'' \emph{IEEE Trans. Pattern Analysis and Machine Intelligence}, 2023.

\bibitem[Wang et~al.(2023{\natexlab{a}})Wang, Zhang, Lei, and Zhang]{wang2023}
P.~Wang, Z.~Zhang, Z.~Lei, and L.~Zhang, ``Sharpness-aware gradient matching for domain generalization,'' in \emph{Proc. Conf. Computer Vision and Pattern Recognition}, 2023, pp. 3769--3778.

\bibitem[Gulrajani and Lopez-Paz(2020)]{gulrajani2022}
I.~Gulrajani and D.~Lopez-Paz, ``In search of lost domain generalization,'' in \emph{Proc. Int. Conf. Learning Represention}, 2020.

\bibitem[Li et~al.(2017)Li, Yang, Song, and Hospedales]{Li_2017_ICCV}
D.~Li, Y.~Yang, Y.-Z. Song, and T.~M. Hospedales, ``Deeper, broader and artier domain generalization,'' Oct 2017.

\bibitem[Fang et~al.(2013)Fang, Xu, and Rockmore]{Fang_2013_ICCV}
C.~Fang, Y.~Xu, and D.~N. Rockmore, ``Unbiased metric learning: On the utilization of multiple datasets and web images for softening bias,'' December 2013.

\bibitem[Venkateswara et~al.(2017)Venkateswara, Eusebio, Chakraborty, and Panchanathan]{Venkateswara_2017_CVPR}
H.~Venkateswara, J.~Eusebio, S.~Chakraborty, and S.~Panchanathan, ``Deep hashing network for unsupervised domain adaptation,'' in \emph{Proc. Conf. Computer Vision and Pattern Recognition}, July 2017.

\bibitem[Beery et~al.(2018)Beery, Van~Horn, and Perona]{Beery_2018_ECCV}
S.~Beery, G.~Van~Horn, and P.~Perona, ``Recognition in terra incognita,'' September 2018.

\bibitem[Wang et~al.(2023{\natexlab{b}})Wang, Zhang, Lei, and Zhang]{Wang_2023_CVPR}
P.~Wang, Z.~Zhang, Z.~Lei, and L.~Zhang, ``Sharpness-aware gradient matching for domain generalization,'' in \emph{Proc. Conf. Computer Vision and Pattern Recognition}, June 2023, pp. 3769--3778.

\bibitem[Jiang et~al.(2023{\natexlab{b}})Jiang, Li, Zhang, Wang, and Xu]{jiang2023dyn}
S.~Jiang, J.~Li, J.~Zhang, Y.~Wang, and T.~Xu, ``Dynamic loss for robust learning,'' \emph{IEEE Trans. Pattern Analysis and Machine Intelligence}, 2023.

\bibitem[Cettolo et~al.(2014)Cettolo, Niehues, Stker, Bentivogli, and Federico]{iwslt2014}
M.~Cettolo, J.~Niehues, S.~Stker, L.~Bentivogli, and M.~Federico, ``Report on the 11th iwslt evaluation campaign, iwslt 2014,'' 2014.

\bibitem[Jastrzebski et~al.(2020)Jastrzebski, Szymczak, Fort, Arpit, Tabor, Cho, and Geras]{stanislaw2020}
S.~Jastrzebski, M.~Szymczak, S.~Fort, D.~Arpit, J.~Tabor, K.~Cho, and K.~J. Geras, ``The break-even point on optimization trajectories of deep neural networks.'' in \emph{Proc. Int. Conf. Learning Represention}, 2020.

\bibitem[Ghorbani et~al.(2019)Ghorbani, Krishnan, and Xiao]{behrooz2019}
B.~Ghorbani, S.~Krishnan, and Y.~Xiao, ``An investigation into neural net optimization via hessian eigenvalue density.'' in \emph{Proc. Int. Conf. Machine Learning}, 2019, pp. 2232--2241.

\bibitem[Li et~al.(2018)Li, Xu, Taylor, Studer, and Goldstein]{li2018visual}
H.~Li, Z.~Xu, G.~Taylor, C.~Studer, and T.~Goldstein, ``Visualizing the loss landscape of neural nets,'' vol.~31, 2018.

\bibitem[Jastrzebski et~al.(2017)Jastrzebski, Kenton, Arpit, Ballas, Fischer, Bengio, and Storkey]{Jastrzebski2018}
S.~Jastrzebski, Z.~Kenton, D.~Arpit, N.~Ballas, A.~Fischer, Y.~Bengio, and A.~Storkey, ``Three factors influencing minima in sgd,'' \emph{arXiv:1711.04623}, 2017.

\bibitem[Dziugaite and Roy(2017)]{dziugaite2017}
G.~K. Dziugaite and D.~M. Roy, ``Computing nonvacuous generalization bounds for deep (stochastic) neural networks with many more parameters than training data,'' in \emph{Proc. Conf. Uncerntainty in Artif. Intel.}, 2017.

\bibitem[Neyshabur et~al.(2017)Neyshabur, Bhojanapalli, Mcallester, Srebro, and Srebro]{behnam2017}
B.~Neyshabur, S.~Bhojanapalli, D.~Mcallester, N.~Srebro, and N.~Srebro, ``Exploring generalization in deep learning.'' in \emph{Proc. Adv. Neural Info. Processing Systems}, vol.~30, 2017, pp. 5947--5956.

\bibitem[Wang and Mao(2022)]{wang2022}
Z.~Wang and Y.~Mao, ``On the generalization of models trained with sgd: Information-theoretic bounds and implications,'' in \emph{Proc. Int. Conf. Learning Represention}, 2022.

\bibitem[Izmailov et~al.(2018)Izmailov, Podoprikhin, Garipov, Vetrov, and Wilson]{izmailov2018}
P.~Izmailov, D.~Podoprikhin, T.~Garipov, D.~P. Vetrov, and A.~G. Wilson, ``Averaging weights leads to wider optima and better generalization,'' in \emph{Proc. Conf. Uncerntainty in Artif. Intel.}, 2018, pp. 876--885.

\bibitem[Wu et~al.(2020)Wu, Xia, and Wang]{wu2020}
D.~Wu, S.-T. Xia, and Y.~Wang, ``Adversarial weight perturbation helps robust generalization,'' in \emph{Proc. Adv. Neural Info. Processing Systems}, vol.~33, 2020, pp. 2958--2969.

\bibitem[Zhao et~al.(2022{\natexlab{b}})Zhao, Zhang, and Hu]{yang2022}
Y.~Zhao, H.~Zhang, and X.~Hu, ``Penalizing gradient norm for efficiently improving generalization in deep learning,'' in \emph{Proc. Int. Conf. Machine Learning}, 2022, pp. 26\,982--26\,992.

\bibitem[Barrett and Dherin(2021)]{barrett2020implicit}
D.~G. Barrett and B.~Dherin, ``Implicit gradient regularization,'' in \emph{Proc. Int. Conf. Learning Represention}, 2021.

\bibitem[Kim et~al.(2023)Kim, Park, Choi, Lee, and Lee]{kim2023multi}
H.~Kim, J.~Park, Y.~Choi, W.~Lee, and J.~Lee, ``Exploring the effect of multi-step ascent in sharpness-aware minimization,'' \emph{arXiv:2302.10181}, 2023.

\bibitem[Mokhtari et~al.(2020)Mokhtari, Hassani, and Karbasi]{mokhtari2018}
A.~Mokhtari, H.~Hassani, and A.~Karbasi, ``Stochastic conditional gradient methods: From convex minimization to submodular maximization,'' \emph{Journal of Machine Learning Research}, vol.~21, no.~1, pp. 4232--4280, 2020.

\bibitem[Zhang et~al.(2021{\natexlab{b}})Zhang, Li, and Giannakis]{zhang2021}
Y.~Zhang, B.~Li, and G.~B. Giannakis, ``Accelerating {F}rank-{W}olfe with weighted average gradients,'' in \emph{Proc. IEEE Int. Conf. Acoust., Speech, Sig. Process.}, 2021, pp. 5529--5533.

\bibitem[Li et~al.(2021{\natexlab{a}})Li, Sadeghi, and Giannakis]{li2021heavy}
B.~Li, A.~Sadeghi, and G.~Giannakis, ``Heavy ball momentum for conditional gradient,'' in \emph{Proc. Advances in Neural Info. Process. Syst.}, 2021.

\bibitem[Tsiligkaridis and Roberts(2022)]{tsiligkaridis2022}
T.~Tsiligkaridis and J.~Roberts, ``Understanding and increasing efficiency of {F}rank-{W}olfe adversarial training,'' in \emph{Proc. Conf. Computer Vision and Pattern Recognition}, 2022, pp. 50--59.

\bibitem[Li et~al.(2021{\natexlab{b}})Li, Coutino, Giannakis, and Leus]{li2020}
B.~Li, M.~Coutino, G.~B. Giannakis, and G.~Leus, ``A momentum-guided {F}rank-{W}olfe algorithm,'' \emph{IEEE Trans. on Signal Processing}, vol.~69, pp. 3597--3611, 2021.

\bibitem[Li et~al.(2021{\natexlab{c}})Li, Wang, Giannakis, and Zhao]{li2020extra}
B.~Li, L.~Wang, G.~B. Giannakis, and Z.~Zhao, ``Enhancing {F}rank-{W}olfe with an extra subproblem,'' in \emph{Proc. of 35th AAAI Conf. on Artificial Intelligence}, 2021.

\bibitem[Huang et~al.(2020)Huang, Tao, and Chen]{huang2020}
F.~Huang, L.~Tao, and S.~Chen, ``Accelerated stochastic gradient-free and projection-free methods,'' in \emph{Proc. Int. Conf. Machine Learning}.\hskip 1em plus 0.5em minus 0.4em\relax PMLR, 2020, pp. 4519--4530.

\end{thebibliography}
\appendices

\section{SFW vis-a-vis SAM adversary}\label{apdx.sec.1}

The stochastic Frank-Wolfe (SFW) algorithm outlined in Alg. \ref{alg.sfw} solves the following generally nonconvex stochastic optimization
\begin{align}\label{eq.apdx.prob}
	\max_{\mathbf{x} \in {\cal X}} h(\mathbf{x}) := \mathbb{E}_\xi \big[ h(\mathbf{x}, \xi) \big]	
\end{align}
where ${\cal X}$ is a convex and compact constraint set. 

\begin{algorithm}[H]
    \caption{SFW \citep{reddi2016}}\label{alg.sfw}
    \begin{algorithmic}[1]
    	\State \textbf{Initialize:} $\mathbf{x}_0\in {\cal X}$
    	\For {$t=0,1,\dots,T-1$}
    		\State draw iid samples $\{ \xi_t^b\}_{b=1}^{B_t}$
    		\State let $\hat{\mathbf{g}}_t = \frac{1}{B_t} \sum_{b=1}^{B_t} \nabla h(\mathbf{x}_t, \xi_t^b)$
    		\State $\mathbf{v}_{t+1} = \argmax_{\mathbf{v} \in \cal X} \langle \hat{\mathbf{g}}_t , \mathbf{v} \rangle$
			\State $\mathbf{x}_{t+1} = (1-\gamma_t) \mathbf{x}_t + \gamma_t \mathbf{v}_{t+1}$ 
		\EndFor
	\end{algorithmic}
\end{algorithm}

The SFW iteration is convergent so long as the batchsize $B_t = {\cal O}(T), \forall t$ is sufficiently large~\citep[Theorem 2]{reddi2016}. This is because line 5 in Alg. \ref{alg.sfw} is extremely sensitive to gradient noise. 

\subsection{The adversary of SAM}

By choosing $h(\bm{\epsilon}) = f(\mathbf{x}_t + \bm{\epsilon})$ and ${\cal X}= \mathbb{S}_\rho(\mathbf{0})$, it is not hard to show that 1-iteration SFW with $\gamma_0=1$ yields a solution equivalent to the stochastic linearization in SAM; cf. \eqref{eq.sam_epsilon_full} and \eqref{eq.sam_epsilon}. This link implies that the SAM adversary also suffers from stability issues in the same way as SFW does. Moreover, what amplifies this issue in SAM is the adoption of a constant batchsize, which is typically small and far less than the ${\cal O}(T)$ required for SFW convergence.

Our VASSO solver is inspired by modified SFW approaches that leverage a constant batchsize to ensure convergence; see e.g., \citep{mokhtari2018,zhang2021,li2021heavy}. Even though, coping with SAM's instability is still challenging in two major aspects. First, SAM uses \textit{one-step} SFW, which internally breaks the nice analytical structure. Moreover, the inner maximization (i.e., the objective function of SFW) \textit{varies from iteration to iteration} along with the updated $\mathbf{x}_t$. This link also suggests the potential of applying other FW approaches for the adversary such as those in  \citep{tsiligkaridis2022, li2020,li2020extra,huang2020}. We leave this direction for future research. 

\subsection{The three dimensional example in Fig. \ref{fig.noise}}\label{eq.apdx-3d}
Here we detail the implementation used for generating Fig. \ref{fig.noise}. We use $\nabla f(\mathbf{x}) = [0.2, -0.1, 0.6]$, and stochastic noise $\bm{\xi} = [\xi_1, \xi_2, \xi_3]$, where $\xi_1$, $\xi_2$, $\xi_3$ are iid Gaussian random variables with variance scaling with $0.2, 1, 2$, respectively. Scaling is used to vary the SNR. We generate $100$ adversaries by solving $\argmax_{\| \bm{\epsilon} \| \leq \rho} \langle \nabla f(\mathbf{x}) + \bm{\xi}, \bm{\epsilon} \rangle$ for each SNR value. As shown in Fig. \ref{fig.noise}, the adversaries are unlikely to capture the sharpness information when the SNR is small, because they spread indistinguishably all over the sphere.

\section{More on $m$-sharpness}\label{apdx.sec.m-sharpness}

\textbf{$m$-sharpness can be ill-posed.}
Our reason for not studying $m$-sharpness directly is that its formulation \citep[eq. (3)]{maksym2022} may be ill-posed mathematically due to the lack of a clear definition on how the dataset ${\cal S}$ is partitioned. Consider the following example, where the same notation as \citep{maksym2022} is adopted for convenience. Suppose that the loss function is $l_i(w) = a_i w^2 + b_i w $, where $\{(a_i, b_i)\}_i$ are data points, and $w$ is the parameter to be optimized. Let the dataset have $4$ samples, $(a_1=0, b_1=1)$; $(a_2=0, b_2=-1)$; $(a_3=-1, b_3=0)$; and, $(a_4=1, b_4=0)$. Consider 2-sharpness.
\begin{enumerate}
	\item[\textbullet] If the data partition is \{1,2\} and \{3,4\}, the objective of 2-sharpness, namely equation (3) in \citep{maksym2022}, becomes $\min_w \sum_{i=1}^2 \max_{||\delta|| < \rho} 0$.
	\item[\textbullet] If the data partition is \{1,3\} and \{2,4\}, the objective is $\min_w \sum_{i=1}^2 \max_{||\delta|| < \rho} f_i(w,\delta)$, where $f_1$ is the loss on partition \{1,3\}, that is, $f_1(w,\delta) = -(w+\delta)^2 + (w+\delta)$; and $f_2(w,\delta) = (w + \delta)^2 - (w + \delta)$ is the loss on partition \{3,4\}.
\end{enumerate}
The objective functions are different when the data partition varies. This makes the problem ill-posed -- data partition leads to entirely different loss curvature. In practice, the data partition even varies across epochs due to the random shuffle in data loading. 

\section{Details on numerical tests}\label{apdx.sec.numerical}

\subsection{ImageNet}
\textbf{ResNet50.}
Due to limitations of computational resources, we report the averaged results over $2$ independent runs. For this dataset, we randomly resize and crop all images to a resolution of $224\times 224$, and apply random horizontal flip, as well as normalization during training. The batchsize is $128$ with a cosine learning rate scheduling and with initial step size $0.05$. The momentum and weight decay of the base optimizer, here SGD, are set to $0.9$ and $10^{-4}$, respectively. We tune $\rho$ from $\{0.05, 0.075, 0.1, 0.2\}$, and choose $\rho=0.075$ for SAM. VASSO uses $\theta=0.99$. VASSO and ASAM adopt the same $\rho=0.075$.

\textbf{ViT-S/32.} We follow the implementation of \citep{du2022saf}, where we train the model for $300$ epochs with a batchsize of $4096$. The baseline optimizer is chosen as AdamW with weight decay $0.3$. SAM relies on $\rho=0.05$. For the implementation of GSAM and V+G, we adopt the same implementation from \citep{zhuang2022}.

\subsection{Domain generalization}\label{apdx.sec.domainbed}
Table \ref{tab.domain-summary} includes details of the DomainBed benchmark \citep{gulrajani2022}; while 
Tables \ref{tab.PACS}, \ref{tab.VLCS}, \ref{tab.OfficeHome}, and \ref{tab.TerraInc} provide leave-one-out cross-validation results per dataset. 


\begin{table}[t]
    \centering
    \caption{Summary of DomainBed benchmark}
    \vspace{-0.2cm}
    \begin{tabular}{c|cc}
    \toprule
      dataset  & \# images & \# classes  \\
   \midrule
       PACS  & 9,991 & 7  \\
       VLCS  & 10,729 & 5  \\
       OfficeHome & 15,588 & 65 \\
       TerraInc & 24,788 & 10 \\
    \bottomrule
    \end{tabular}
    \label{tab.domain-summary}
\end{table}

\begin{table}[ht]
	\centering
	\renewcommand{\arraystretch}{1.4}
	\caption{Cross-validation accuracies (\%) on PACS dataset.}
	\begin{tabular}{c|cccc}
	\toprule
	Test domain  & Adam             & SAM              & GSAM               & VASSO    \\
	\midrule 
	\underline{P}hoto  & 97.2$_{\pm0.3}$  & 97.0$_{\pm0.4}$  & 97.5$_{\pm0.0}$           &  95.4$_{\pm0.2}$  \\
	\underline{A}rt painting  &  84.7$_{\pm0.4}$ & 85.6$_{\pm2.1}$  & 86.9$_{\pm0.1}$  &  84.2$_{\pm0.6}$ \\
	\underline{C}artoon  &  80.8$_{\pm0.6}$ & 80.9$_{\pm1.2}$  & 80.4$_{\pm0.2}$    &  81.8$_{\pm0.5}$ \\
	\underline{S}ketch  &  79.3$_{\pm1.0}$ & 79.6$_{\pm1.6}$  & 78.7$_{\pm0.8}$    &  82.5$_{\pm0.4}$ \\
	\midrule 
	Average   & 85.5 & 85.8  & 85.9  & \textbf{86.0}  \\
	\bottomrule
	\end{tabular}
	\vspace{0.2cm}
	\label{tab.PACS}
\end{table}

\begin{table}[ht]
	\centering
	\caption{Cross-validation accuracies (\%) on VLCS dataset.}
	\renewcommand{\arraystretch}{1.3}
	\begin{tabular}{c|cccc}
	\toprule
	Test domain  & Adam             & SAM              & GSAM               & VASSO    \\
	\midrule 
	\underline{V}OC2007  &  75.2$_{\pm1.6}$ & 79.8$_{\pm0.1}$  & 78.5$_{\pm0.8}$           &  78.9$_{\pm0.7}$  \\
	\underline{L}abelMe  &  64.7$_{\pm1.2}$ & 65.0$_{\pm1.0}$  & 64.9$_{\pm0.2}$  &  65.9$_{\pm0.5}$ \\
	\underline{C}altech101  &  98.0$_{\pm0.3}$ & 99.1$_{\pm0.2}$  & 98.7$_{\pm0.3}$    &  99.0$_{\pm0.2}$ \\
	\underline{S}UN09  & 71.4$_{\pm1.2}$  & 73.7$_{\pm1.0}$  & 74.3$_{\pm0.0}$    &  74.4$_{\pm0.9}$ \\
	\midrule 
	Average   & 77.3 & 79.4  & 79.1  & \textbf{79.6}  \\
	\bottomrule
	\end{tabular}
	\vspace{0.2cm}
	\label{tab.VLCS}
\end{table}

\begin{table}[ht]
	\centering
	\renewcommand{\arraystretch}{1.3}
	\caption{Cross-validation accuracies (\%) on OfficeHome dataset.}
	\begin{tabular}{c|cccc}
	\toprule
	Test domain  & Adam             & SAM              & GSAM               & VASSO    \\
	\midrule 
	Art  & 61.3$_{\pm0.7}$  & 64.5$_{\pm0.3}$  & 64.9$_{\pm0.1}$   &  64.8$_{\pm0.5}$  \\
	Clipart  &  52.4$_{\pm0.3}$ & 56.5$_{\pm0.2}$  & 55.2$_{\pm0.2}$  &  57.1$_{\pm0.4}$ \\
	Product  &  75.8$_{\pm0.1}$ & 77.4$_{\pm0.1}$  & 77.8$_{\pm0.0}$    &  78.2$_{\pm0.2}$ \\
	Real world  &  76.6$_{\pm0.3}$ & 79.8$_{\pm0.4}$  & 79.2$_{\pm0.2}$    &  79.3$_{\pm0.1}$ \\
	\midrule 
	Average   & 66.5 & 69.6  & 69.3  & \textbf{69.8}  \\
	\bottomrule
	\end{tabular}
	\vspace{0.2cm}
	\label{tab.OfficeHome}
\end{table}

\begin{table}[ht]
	\centering
	\caption{Cross-validation accuracies (\%) on TerraIncognita dataset.}
	\renewcommand{\arraystretch}{1.3}
	\begin{tabular}{c|cccc}
	\toprule
	Test domain  & Adam             & SAM              & GSAM               & VASSO    \\
	\midrule 
	Location 38  & 42.1$_{\pm1.4}$  & 38.4$_{\pm2.4}$  & 39.3$_{\pm0.2}$  &  38.5$_{\pm0.7}$ \\
	Location 43  & 56.9$_{\pm1.8}$  & 54.0$_{\pm1.0}$  & 59.6$_{\pm0.0}$    &  60.7$_{\pm0.2}$ \\
	Location 46  & 35.7$_{\pm3.9}$  & 34.5$_{\pm0.8}$  & 38.2$_{\pm0.8}$    &  37.5$_{\pm0.8}$ \\
 	Location 100  & 49.8$_{\pm4.4}$  & 46.3$_{\pm1.0}$  & 50.8$_{\pm0.1}$   &  51.4$_{\pm0.8}$  \\
	\midrule 
	Average   & 46.1 & 43.3  & \textbf{47.0}  & \textbf{47.0}  \\
	\bottomrule
	\end{tabular}
	\vspace{0.2cm}
	\label{tab.TerraInc}
\end{table}

\section{Missing proofs}\label{apdx.sec.proof}

Alg. \ref{alg.sam} can be written as
\begin{subequations}\label{eq.alg_rewrite}
\begin{align}
	\mathbf{x}_{t+\frac{1}{2}} &= \mathbf{x}_t + \bm{\epsilon}_t \\
	 \mathbf{x}_{t+1} &= \mathbf{x}_t -  \eta_t \mathbf{g}_t(\mathbf{x}_{t+\frac{1}{2}})
\end{align}
\end{subequations}
where $\| \bm{\epsilon}_t\| = \rho$. In SAM, we have $\bm{\epsilon}_t = \rho \frac{ \mathbf{g}_t(\mathbf{x}_t)}{\| \mathbf{g}_t(\mathbf{x}_t) \|}	$, and in VASSO we have $\bm{\epsilon}_t = \rho \frac{ \mathbf{d}_t}{\| \mathbf{d}_t \|}	$.

\subsection{Useful lemmas}
This subsection presents useful lemmas for our main results.

\begin{lemma}\label{apdx.lemma1}
	Alg. \ref{alg.sam} (or equivalently iteration \eqref{eq.alg_rewrite}) ensures that
	\begin{align*}
		& \eta_t \mathbb{E} \big[ \langle \nabla f(\mathbf{x}_t), \nabla f(\mathbf{x}_t) - \mathbf{g}_t(\mathbf{x}_{t+\frac{1}{2}})  \rangle \big] \\
		&~~~~~~~~~~~~~~~~~~~~~~~~~~~~~~~~~~ \leq \frac{L\eta_t^2 }{2} \mathbb{E} \big[ \| \nabla f(\mathbf{x}_t)\|^2 \big]  + \frac{L\rho^2}{2}.
	\end{align*}
\end{lemma}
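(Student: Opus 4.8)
The plan is to split the stochastic gradient at the adversarial point into a zero-mean noise term evaluated at the base point plus a smoothness-controlled drift term. Concretely, I would add and subtract $\mathbf{g}_t(\mathbf{x}_t)$ and write
\[
\nabla f(\mathbf{x}_t) - \mathbf{g}_t(\mathbf{x}_{t+\frac{1}{2}}) = \big( \nabla f(\mathbf{x}_t) - \mathbf{g}_t(\mathbf{x}_t) \big) + \big( \mathbf{g}_t(\mathbf{x}_t) - \mathbf{g}_t(\mathbf{x}_{t+\frac{1}{2}}) \big),
\]
then take the inner product with $\nabla f(\mathbf{x}_t)$ and handle the two resulting terms separately.

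For the first term, I would condition on $\mathbf{x}_t$ and invoke unbiasedness (Assumption \ref{as.3}). Since $\nabla f(\mathbf{x}_t)$ is deterministic given $\mathbf{x}_t$ and $\mathbb{E}[\mathbf{g}_t(\mathbf{x}_t)\mid \mathbf{x}_t]=\nabla f(\mathbf{x}_t)$, the tower property gives $\mathbb{E}\big[\langle \nabla f(\mathbf{x}_t),\, \nabla f(\mathbf{x}_t)-\mathbf{g}_t(\mathbf{x}_t)\rangle\big]=0$, so this contribution drops out entirely. The point that requires care is that, although $\bm{\epsilon}_t$ (and hence $\mathbf{x}_{t+\frac{1}{2}}$) is built from the same minibatch ${\cal B}_t$, this first term evaluates the stochastic gradient \emph{at the base point} $\mathbf{x}_t$; unbiasedness at $\mathbf{x}_t$ therefore applies directly and the reuse of ${\cal B}_t$ for the perturbation is irrelevant.

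For the second term, I would apply Cauchy--Schwarz followed by the smoothness of Assumption \ref{as.2}, using that $\|\mathbf{x}_{t+\frac{1}{2}}-\mathbf{x}_t\|=\|\bm{\epsilon}_t\|=\rho$, to obtain
\[
\langle \nabla f(\mathbf{x}_t),\, \mathbf{g}_t(\mathbf{x}_t)-\mathbf{g}_t(\mathbf{x}_{t+\frac{1}{2}})\rangle \le \|\nabla f(\mathbf{x}_t)\|\cdot L\rho.
\]
Multiplying through by $\eta_t$ and applying Young's inequality with the weighting $\eta_t L\rho\|\nabla f(\mathbf{x}_t)\| = \big(\eta_t\sqrt{L}\,\|\nabla f(\mathbf{x}_t)\|\big)\big(\sqrt{L}\rho\big) \le \frac{L\eta_t^2}{2}\|\nabla f(\mathbf{x}_t)\|^2 + \frac{L\rho^2}{2}$ yields the stated bound after taking expectations and combining with the vanishing first term.

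I do not expect a genuine obstacle here: the argument is a single add-subtract decomposition followed by routine Cauchy--Schwarz, Lipschitz, and Young estimates. The only subtlety worth flagging is the conditional-expectation step for the zero-mean term, where one must be deliberate about evaluating $\mathbf{g}_t$ at $\mathbf{x}_t$ rather than at the adversarial point $\mathbf{x}_{t+\frac{1}{2}}$ so that unbiasedness remains valid despite the coupling through ${\cal B}_t$.
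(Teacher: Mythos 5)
Your proposal is correct and follows essentially the same route as the paper's proof: the identical add-and-subtract of $\mathbf{g}_t(\mathbf{x}_t)$, the vanishing of the noise term at the base point via conditional unbiasedness, and the Cauchy--Schwarz/Lipschitz/Young chain with $\|\bm{\epsilon}_t\|=\rho$. The subtlety you flag about evaluating $\mathbf{g}_t$ at $\mathbf{x}_t$ so that unbiasedness survives the reuse of ${\cal B}_t$ is exactly the point the paper's decomposition is designed to handle.
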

\begin{proof}
	To start, we have that
	\begin{align*}
		& ~~~~~ \big\langle \nabla f(\mathbf{x}_t), \nabla f(\mathbf{x}_t) - \mathbf{g}_t(\mathbf{x}_{t+\frac{1}{2}})  \big\rangle \\
		& = \langle \nabla f(\mathbf{x}_t), \nabla f(\mathbf{x}_t) - \mathbf{g}_t(\mathbf{x}_t) + \mathbf{g}_t(\mathbf{x}_t) - \mathbf{g}_t(\mathbf{x}_{t+\frac{1}{2}})  \rangle.
	\end{align*}
	Taking expectation conditioned on $\mathbf{x}_t$, we arrive at
	\begin{align}
		& ~~~~~ \mathbb{E} \big[ \big\langle \nabla f(\mathbf{x}_t), \nabla f(\mathbf{x}_t) - \mathbf{g}_t(\mathbf{x}_{t+\frac{1}{2}})  \big\rangle | \mathbf{x}_t \big] \nonumber \\
		& = \mathbb{E} \big[ \langle \nabla f(\mathbf{x}_t), \nabla f(\mathbf{x}_t) - \mathbf{g}_t(\mathbf{x}_t)   \rangle | \mathbf{x}_t \big] \nonumber \\
		& ~~~~~~~~~~~~~~~~ + \mathbb{E} \big[ \langle \nabla f(\mathbf{x}_t), \mathbf{g}_t(\mathbf{x}_t) - \mathbf{g}_t(\mathbf{x}_{t+\frac{1}{2}})  \rangle | \mathbf{x}_t \big] \nonumber \\
		& = \mathbb{E} \big[ \langle \nabla f(\mathbf{x}_t), \mathbf{g}_t(\mathbf{x}_t) - \mathbf{g}_t(\mathbf{x}_{t+\frac{1}{2}})  \rangle | \mathbf{x}_t \big]  \nonumber \\
		& \leq \mathbb{E} \big[ \| \nabla f(\mathbf{x}_t)\|  \cdot \| \mathbf{g}_t(\mathbf{x}_t) - \mathbf{g}_t(\mathbf{x}_{t+\frac{1}{2}}) \| | \mathbf{x}_t \big] \nonumber \\
		& \stackrel{(a)}{\leq} L  \mathbb{E} \big[ \| \nabla f(\mathbf{x}_t)\| \cdot \| \mathbf{x}_t - \mathbf{x}_{t+\frac{1}{2}} \| | \mathbf{x}_t \big] \nonumber \\
		& \stackrel{(b)}{=}  L  \rho  \| \nabla f(\mathbf{x}_t)\|  \nonumber
	\end{align}
	where (a) follows from Assumption \ref{as.2}; and (b) holds because $\mathbf{x}_t - \mathbf{x}_{t+\frac{1}{2}} = -\bm{\epsilon}_t$, and its norm equals $\rho$. This inequality ensures that
	\begin{align*}
		& ~~~~~ \eta_t 	\mathbb{E} \big[ \big\langle \nabla f(\mathbf{x}_t), \nabla f(\mathbf{x}_t) - \mathbf{g}_t(\mathbf{x}_{t+\frac{1}{2}})  \big\rangle | \mathbf{x}_t \big] \\
		& \leq  L  \rho \eta_t \| \nabla f(\mathbf{x}_t)\| \leq \frac{L\eta_t^2 \| \nabla f(\mathbf{x}_t)\|^2 }{2} + \frac{L\rho^2}{2}
	\end{align*}
	where the last inequality holds because $\rho \eta_t \| \nabla f(\mathbf{x}_t)\| \leq \frac{1}{2} \eta_t^2 \| \nabla f(\mathbf{x}_t)\|^2 + \frac{1}{2} \rho^2 $. Taking expectation w.r.t. $\mathbf{x}_t$ completes the proof.
\end{proof}

\begin{lemma}\label{apdx.lemma2}
	Alg. \ref{alg.sam} (or equivalently iteration \eqref{eq.alg_rewrite}) ensures that
	\begin{align*}
		\mathbb{E} \big[ \| \mathbf{g}_t(\mathbf{x}_{t+\frac{1}{2}}) \|^2 \big] \leq 2 L^2 \rho^2 + 2 \mathbb{E} \big[ \|  \nabla f(\mathbf{x}_t) \|^2 \big] + 2 \sigma^2.
	\end{align*}
\end{lemma}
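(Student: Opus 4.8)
The plan is to reduce the second moment of the perturbed stochastic gradient to two quantities we can control by a single minibatch together with one Lipschitz step. First I would add and subtract $\mathbf{g}_t(\mathbf{x}_t)$ inside the norm, writing $\mathbf{g}_t(\mathbf{x}_{t+\frac{1}{2}}) = \big[\mathbf{g}_t(\mathbf{x}_{t+\frac{1}{2}}) - \mathbf{g}_t(\mathbf{x}_t)\big] + \mathbf{g}_t(\mathbf{x}_t)$, and apply the elementary inequality $\|\mathbf{a}+\mathbf{b}\|^2 \le 2\|\mathbf{a}\|^2 + 2\|\mathbf{b}\|^2$. This splits the target into a ``displacement'' term and a ``base'' term, both evaluated on the same minibatch ${\cal B}_t$, which is precisely the source of the factor $2$ in the claimed bound.

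For the displacement term, the key observation is that $\mathbf{g}_t(\mathbf{x}_{t+\frac{1}{2}})$ and $\mathbf{g}_t(\mathbf{x}_t)$ share the same minibatch and hence the same Lipschitz map $\mathbf{g}_t(\cdot)$. Assumption \ref{as.2} then yields the pathwise estimate $\|\mathbf{g}_t(\mathbf{x}_{t+\frac{1}{2}}) - \mathbf{g}_t(\mathbf{x}_t)\| \le L\|\mathbf{x}_{t+\frac{1}{2}} - \mathbf{x}_t\| = L\|\bm{\epsilon}_t\| = L\rho$, where I use $\mathbf{x}_{t+\frac{1}{2}} - \mathbf{x}_t = \bm{\epsilon}_t$ from \eqref{eq.alg_rewrite} together with $\|\bm{\epsilon}_t\| = \rho$. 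Squaring gives $L^2\rho^2$, so this term contributes $2L^2\rho^2$.

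For the base term I would invoke the bias--variance decomposition. Conditioning on $\mathbf{x}_t$ and using the unbiasedness $\mathbb{E}[\mathbf{g}_t(\mathbf{x}_t) \mid \mathbf{x}_t] = \nabla f(\mathbf{x}_t)$ from Assumption \ref{as.3}, the cross term vanishes, so $\mathbb{E}[\|\mathbf{g}_t(\mathbf{x}_t)\|^2 \mid \mathbf{x}_t] = \|\nabla f(\mathbf{x}_t)\|^2 + \mathbb{E}[\|\mathbf{g}_t(\mathbf{x}_t) - \nabla f(\mathbf{x}_t)\|^2 \mid \mathbf{x}_t] \le \|\nabla f(\mathbf{x}_t)\|^2 + \sigma^2$, again by Assumption \ref{as.3}. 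Taking total expectation and combining with the displacement bound produces exactly $2L^2\rho^2 + 2\,\mathbb{E}[\|\nabla f(\mathbf{x}_t)\|^2] + 2\sigma^2$.

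I do not anticipate a genuine obstacle here; the only point requiring care is that $\bm{\epsilon}_t$ is itself a function of $\mathbf{g}_t(\mathbf{x}_t)$ through the same minibatch, so $\mathbf{x}_{t+\frac{1}{2}}$ and $\mathbf{g}_t(\mathbf{x}_t)$ are statistically dependent. This dependence is harmless: the displacement term is controlled by a deterministic, pathwise Lipschitz estimate rather than by any independence argument, and the unbiasedness used for the base term is a statement conditioned on $\mathbf{x}_t$ that does not require $\bm{\epsilon}_t$ to be independent of the minibatch noise.
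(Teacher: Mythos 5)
Your proof is correct and follows essentially the same route as the paper's: the same add-and-subtract of $\mathbf{g}_t(\mathbf{x}_t)$, the same $\|\mathbf{a}+\mathbf{b}\|^2 \le 2\|\mathbf{a}\|^2+2\|\mathbf{b}\|^2$ split, the pathwise Lipschitz bound $L\rho$ on the displacement, and the bias--variance expansion of $\mathbb{E}[\|\mathbf{g}_t(\mathbf{x}_t)\|^2\mid\mathbf{x}_t]$ via Assumption \ref{as.3}. Your closing remark on the dependence between $\bm{\epsilon}_t$ and the minibatch is a nice explicit justification of a point the paper leaves implicit.
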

\begin{proof}
	The proof starts with bounding $\| \mathbf{g}_t(\mathbf{x}_{t+\frac{1}{2}})\|$ as
	\begin{align*}
		\| \mathbf{g}_t(\mathbf{x}_{t+\frac{1}{2}}) \|^2 & = \| \mathbf{g}_t(\mathbf{x}_{t+\frac{1}{2}})  - \mathbf{g}_t(\mathbf{x}_t) + \mathbf{g}_t(\mathbf{x}_t) \|^2 \\
		& \leq 2 \| \mathbf{g}_t(\mathbf{x}_{t+\frac{1}{2}}) - \mathbf{g}_t(\mathbf{x}_t) \|^2 + 2 \| \mathbf{g}_t (\mathbf{x}_t) \|^2 \\
		& \stackrel{(a)}{\leq}  2 L^2 \|\mathbf{x}_t - \mathbf{x}_{t+\frac{1}{2}} \|^2 + 2 \| \mathbf{g}_t(\mathbf{x}_t) \|^2  \\
		& \stackrel{(b)}{=} 2 L^2 \rho^2 + 2 \| \mathbf{g}_t(\mathbf{x}_t) - \nabla f(\mathbf{x}_t) + \nabla f(\mathbf{x}_t) \|^2  
	\end{align*}
	where (a) follows from Assumption \ref{as.2}; and (b) is because $\mathbf{x}_t - \mathbf{x}_{t+\frac{1}{2}} = -\bm{\epsilon}_t$, and its norm equals $\rho$. 
	
Taking expectation conditioned on $\mathbf{x}_t$, we have
	\begin{align*}
		& ~~~~~ \mathbb{E}\big[ \| \mathbf{g}_t( \mathbf{x}_{t+\frac{1}{2}}) \|^2 | \mathbf{x}_t \big]	 \\
		& \leq 2 L^2 \rho^2 + 2 \mathbb{E}\big[ \| \mathbf{g}_t(\mathbf{x}_t) - \nabla f(\mathbf{x}_t) + \nabla f(\mathbf{x}_t) \|^2 | \mathbf{x}_t \big] \nonumber \\
		& \leq 2 L^2 \rho^2 + 2 \|  \nabla f(\mathbf{x}_t) \|^2 + 2 \sigma^2 \nonumber
	\end{align*}
	where the last inequality is because of Assumption \ref{as.3}. Taking expectation w.r.t. the randomness in $\mathbf{x}_t$ completes the proof.
\end{proof}

\begin{lemma}\label{apdx.lemma3}
With $A_{t+1} = \alpha A_t + \beta$ for some $\alpha \in (0, 1)$, we have
	\begin{align*}
		A_{t+1} \leq \alpha^{t+1} A_0 + \frac{\beta}{1-\alpha}.
	\end{align*}
\end{lemma}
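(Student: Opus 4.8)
The plan is to solve this linear recursion in closed form and then bound the resulting geometric sum. First I would unroll $A_{t+1} = \alpha A_t + \beta$ by repeated back-substitution, which yields the closed form
\[
	A_{t+1} = \alpha^{t+1} A_0 + \beta \sum_{k=0}^{t} \alpha^k.
\]
This identity is most cleanly certified by induction on $t$: the base case $t=0$ returns $A_1 = \alpha A_0 + \beta$, which matches the recursion, and the inductive step follows by inserting the hypothesis for $A_t$ into $A_{t+1} = \alpha A_t + \beta$ and collecting the geometric terms.

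Next I would evaluate the finite geometric series using $\alpha \neq 1$ (guaranteed by $\alpha \in (0,1)$), namely $\sum_{k=0}^{t} \alpha^k = \frac{1-\alpha^{t+1}}{1-\alpha}$. Since $\alpha \in (0,1)$ forces $\alpha^{t+1} > 0$, we have $1 - \alpha^{t+1} < 1$, and therefore (using that $\beta \geq 0$, as holds in every invocation of this lemma, where $\beta$ plays the role of a variance-type quantity) the weighted sum obeys $\beta \sum_{k=0}^{t} \alpha^k \leq \frac{\beta}{1-\alpha}$. Substituting this bound into the closed form immediately delivers the claimed inequality
\[
	A_{t+1} \leq \alpha^{t+1} A_0 + \frac{\beta}{1-\alpha}.
\]

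The main obstacle here is essentially nonexistent: the statement is a standard contraction-type estimate and the argument is elementary. The only point requiring a moment of care is the implicit nonnegativity of $\beta$, which is exactly what permits dropping the $-\alpha^{t+1}$ term in the numerator of the geometric sum; this hypothesis is silently satisfied wherever the lemma is applied, since the quantities being recursed are squared-error (hence nonnegative) bounds.
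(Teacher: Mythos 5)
Your proof is correct and follows essentially the same route as the paper: unroll the recursion and bound the resulting geometric sum by $\frac{1}{1-\alpha}$. Your explicit remark that $\beta \geq 0$ is needed to drop the $-\alpha^{t+1}$ term is a fair point of care that the paper's one-line proof leaves implicit, and it indeed holds in every invocation of the lemma.
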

\begin{proof}
	The proof can be completed by simply unrolling $A_{t+1}$ and using the fact that $1 + \alpha + \alpha^2 + \ldots + \alpha^t \leq \frac{1}{1 - \alpha }$.	
\end{proof}

\subsection{Proof of Theorem \ref{thm.sam}}
\begin{proof}
	Using Assumption \ref{as.2}, we have that
	\begin{align*}
		&~~~~~ f(\mathbf{x}_{t+1}) - f (\mathbf{x}_t)  \\
		& \leq 	\langle \nabla f(\mathbf{x}_t), \mathbf{x}_{t+1} - \mathbf{x}_t \rangle + \frac{L}{2} \| \mathbf{x}_{t+1} - \mathbf{x}_t \|^2 \\
		& = - \eta_t \langle \nabla f(\mathbf{x}_t), \mathbf{g}_t(\mathbf{x}_{t+\frac{1}{2}})  \rangle + \frac{L\eta_t^2}{2} \| \mathbf{g}_t(\mathbf{x}_{t+\frac{1}{2}}) \|^2 \nonumber \\
		& = - \eta_t \langle \nabla f(\mathbf{x}_t), \mathbf{g}_t(\mathbf{x}_{t+\frac{1}{2}}) - \nabla f(\mathbf{x}_t) + \nabla f(\mathbf{x}_t)  \rangle  \\
		& ~~~~~~~~~~~~~~~~~~~~~~~~~~~~~~~~~~~~~~~~~~~~~~~~~~~~~~~ + \frac{L\eta_t^2}{2} \| \mathbf{g}_t(\mathbf{x}_{t+\frac{1}{2}})\|^2 \nonumber \\
		& = - \eta_t \| \nabla f(\mathbf{x}_t) \|^2 - \eta_t \langle \nabla f(\mathbf{x}_t), \mathbf{g}_t(\mathbf{x}_{t+\frac{1}{2}}) - \nabla f(\mathbf{x}_t)  \rangle \\
		& ~~~~~~~~~~~~~~~~~~~~~~~~~~~~~~~~~~~~~~~~~~~~~~~~~~~~~~~ + \frac{L\eta_t^2}{2} \| \mathbf{g}_t(\mathbf{x}_{t+\frac{1}{2}}) \|^2 .
	\end{align*}
	
	Taking expectation, and plugging Lemmas \ref{apdx.lemma1} and \ref{apdx.lemma2}, yields
	\begin{align*}
		 \mathbb{E}\big[  f(\mathbf{x}_{t+1}) - f (\mathbf{x}_t)  \big] & \leq -  \bigg( \eta_t - \frac{3L\eta_t^2}{2} \bigg) \mathbb{E}\big[ \| \nabla f(\mathbf{x}_t )\|^2 \big] \\
		 & ~~~~~~~ + \frac{L\rho^2}{2}  + L^3 \eta_t^2 \rho^2  + L \eta_t^2 \sigma^2.
	\end{align*}

	As the parameter selection ensures that $\eta_t \equiv \eta = \frac{\eta_0}{ \sqrt{T}} \leq \frac{2}{3L}$, we can divide both sides by $\eta$, and rearrange terms to arrive at
	\begin{align*}
		& ~~~~~ \bigg( 1 - \frac{3L\eta}{2} \bigg) \mathbb{E}\big[ \| \nabla f(\mathbf{x}_t )\|^2 \big] \\
		 & \leq \frac{\mathbb{E}\big[  f (\mathbf{x}_t) - f(\mathbf{x}_{t+1})  \big]}{\eta}  + \frac{L\rho^2}{2 \eta }  + L^3 \eta \rho^2  + L \eta \sigma^2.
	\end{align*}
	Summing over $t$, we have
	\begin{align*}
		& ~~~~~ \bigg( 1 - \frac{3L\eta}{2} \bigg) \frac{1}{T}\sum_{t=0}^{T-1}\mathbb{E}\big[ \| \nabla f(\mathbf{x}_t )\|^2 \big] \\
		& \leq \frac{\mathbb{E}\big[  f (\mathbf{x}_0) - f(\mathbf{x}_T)  \big]}{\eta T}  + \frac{L\rho^2}{2 \eta }  + L^3 \eta \rho^2  + L \eta \sigma^2 \\
		& \stackrel{(a)}{\leq} \frac{  f (\mathbf{x}_0) - f^*  }{\eta T}  + \frac{L\rho^2}{2 \eta }  + L^3 \eta \rho^2  + L \eta \sigma^2 \nonumber \\
		& = \frac{  f (\mathbf{x}_0) - f^*  }{\eta_0 \sqrt{T}}  +  \frac{L\rho_0^2}{2 \eta_0 \sqrt{T}}  + \frac{L^3 \eta_0 \rho_0^2}{T^{3/2}}  + \frac{L \eta_0 \sigma^2}{\sqrt{T}} \nonumber 
	\end{align*}
	where (a) uses Assumption \ref{as.1}, and the last equality is obtained by plugging in the values of $\rho$ and $\eta$. This completes the proof of the first part. 
	
	For the second part of this theorem, we have that
	\begin{align*}
		& ~~~~~ \mathbb{E}\big[ \| \nabla f(\mathbf{x}_t + \bm{\epsilon}_t )\|^2 \big] \\
		& = \mathbb{E}\big[ \| \nabla f(\mathbf{x}_t + \bm{\epsilon}_t ) +\nabla f(\mathbf{x}_t) - \nabla f(\mathbf{x}_t) \|^2 \big] \\
		& \leq  2 \mathbb{E}\big[ \| \nabla f(\mathbf{x}_t \|^2 \big] + 2 \mathbb{E}\big[ \| \nabla f(\mathbf{x}_t + \bm{\epsilon}_t ) - \nabla f(\mathbf{x}_t) \|^2 \big] \\
		& \leq  2 \mathbb{E}\big[ \| \nabla f(\mathbf{x}_t \|^2 \big] + 2 L^2 \rho^2 \\
		& = 2 \mathbb{E}\big[ \| \nabla f(\mathbf{x}_t \|^2 \big] +  \frac{2 L^2 \rho_0^2}{T}.
	\end{align*}
	Averaging over $t$ completes the proof.
	
	\noindent\textbf{Extension}. Theorem \ref{thm.sam} also holds when $\bm{\epsilon}_t \in \mathbb{S}_\rho(\bm{0})$ is replaced by $\bm{\epsilon}_t \in \mathbb{B}_\rho(\bm{0})$. Since the derivation is similar, we will skip it here.
	
\end{proof}

\subsection{Proof of Theorem \ref{thm.vso}}
\begin{proof}
	To bound the MSE, we first have that
	\begin{align}\label{apdx.eq.mse}
		& ~~~~~ \| \mathbf{d}_t - \nabla f(\mathbf{x}_t) \|	^2 \\
		& = \| (1 - \theta)	\mathbf{d}_{t-1} + \theta \mathbf{g}_t(\mathbf{x}_t) - (1 - \theta)  \nabla f(\mathbf{x}_t) -  \theta \nabla f(\mathbf{x}_t) \|^2   \nonumber \\
		& = (1-\theta)^2 \| \mathbf{d}_{t-1} - \nabla f(\mathbf{x}_t) \|^2 + \theta^2 \|\mathbf{g}_t(\mathbf{x}_t)  - \nabla f(\mathbf{x}_t) \|^2  \nonumber \\
		& ~~~~~~~~~~~~~~~~   + 2 \theta(1-\theta ) \langle \mathbf{d}_{t-1} - \nabla f(\mathbf{x}_t),  \mathbf{g}_t(\mathbf{x}_t)  - \nabla f(\mathbf{x}_t) \rangle. \nonumber 
	\end{align}
We will cope with three terms in the right hand side (rhs) of \eqref{apdx.eq.mse} separately. 
	
	The second term can be bounded directly using Assumption \ref{as.2}
	\begin{align}\label{apdx.eq.mse2}
		\mathbb{E}\big[  \|\mathbf{g}_t(\mathbf{x}_t)  - \nabla f(\mathbf{x}_t) \|^2  | \mathbf{x}_t \big] \leq \sigma^2.
	\end{align}
	
	For the third term, we have 
	\begin{align}\label{apdx.eq.mse3}
		\mathbb{E}\big[ \langle \mathbf{d}_{t-1} - \nabla f(\mathbf{x}_t),  \mathbf{g}_t(\mathbf{x}_t)  - \nabla f(\mathbf{x}_t) \rangle | \mathbf{x}_t \big] = 0.
	\end{align}

	The first term is bounded through
	\begin{align*}
		& ~~~~~ \| \mathbf{d}_{t-1} - \nabla f(\mathbf{x}_t)	 \|^2 \\
		& = \| \mathbf{d}_{t-1} - \nabla f(\mathbf{x}_{t-1}) + \nabla f(\mathbf{x}_{t-1}) - \nabla f(\mathbf{x}_t) \|^2 \\
		& \stackrel{(a)}{\leq} (1 \! + \! \lambda)\| \mathbf{d}_{t\!-\!1} \!- \!\nabla f(\mathbf{x}_{t\!-\!1}) \|^2 + \big(1 \! + \! \frac{1}{\lambda} \big) \| \nabla f(\mathbf{x}_{t\!-\!1}) \! - \! \nabla f(\mathbf{x}_t) \|^2 \nonumber \\
		& \leq (1 + \lambda)\| \mathbf{d}_{t-1} - \nabla f(\mathbf{x}_{t-1}) \|^2 + \big(1+ \frac{1}{\lambda}\big) L^2 \| \mathbf{x}_{t-1} -\mathbf{x}_t \|^2  \nonumber \\
		& = (1 \! + \! \lambda)\| \mathbf{d}_{t-1} - \nabla f(\mathbf{x}_{t-1}) \|^2 + \big(1 \! + \!\frac{1}{\lambda}\big) \eta^2 L^2  \| \mathbf{g}_{t-1}(\mathbf{x}_{t-\frac{1}{2}}) \|^2 
	\end{align*}
	where (a) follows from Young's inequality. Taking expectation and applying Lemma \ref{apdx.lemma2}, we arrive at
	\begin{align}\label{apdx.eq.mse1}
		& ~~~~ \mathbb{E} \big[ \| \mathbf{d}_{t-1} - \nabla f(\mathbf{x}_t)	 \|^2 \big] \\
		& \leq (1 + \lambda) \mathbb{E} \big[  \| \mathbf{d}_{t-1} - \nabla f(\mathbf{x}_{t-1}) \|^2 \big] \nonumber \\
		& ~~~~~ + \big(1+ \frac{1}{\lambda}\big) \eta^2L^2 \bigg( 2 L^2 \rho^2 + 2  \mathbb{E}\big[ \| \nabla f(\mathbf{x}_{t-1}) \|^2 \big] + 2\sigma^2 \bigg) \nonumber \\
		& \leq (1 + \lambda) \mathbb{E} \big[  \| \mathbf{d}_{t-1} - \nabla f(\mathbf{x}_{t-1}) \|^2 \big] + \big(1+ \frac{1}{\lambda}\big)  \cdot {\cal O}\bigg(\frac{\sigma^2}{\sqrt{T}}\bigg). \nonumber
	\end{align}

	The last inequality uses the value of $\eta= \frac{\eta_0}{ \sqrt{T}}$ and $\rho= \frac{\rho_0}{ \sqrt{T}}$. In particular, we have $\eta^2 \rho^2 L^4 = {\cal O}(1/T^2)$ and $\eta^2 L^2\sigma^2 = {\cal O}(\sigma^2/T)$, and 
	\begin{align*}
		&~~~~~ \eta^2 L^2 \mathbb{E}\big[ \| \nabla f(\mathbf{x}_t) \|^2 \big]  = \frac{\eta_0^2 L^2}{T} \mathbb{E}\big[ \| \nabla f(\mathbf{x}_t) \|^2 \big] \\
		& \leq \eta_0^2 L^2 \frac{1}{T} \sum_{t=0}^{T-1} \mathbb{E}\big[ \| \nabla f(\mathbf{x}_t) \|^2 \big]  = {\cal O}\bigg( \frac{\sigma^2}{\sqrt{T}} \bigg)
	\end{align*}
	where for the last equality we have also used Theorem \ref{thm.sam}.

Combining \eqref{apdx.eq.mse} with \eqref{apdx.eq.mse1}, \eqref{apdx.eq.mse2} with \eqref{apdx.eq.mse3}, and choosing $\lambda = \frac{\theta}{1 - \theta}$, we have 
	\begin{align*}
		& ~~~~~ \mathbb{E} \big[ \| \mathbf{d}_t - \nabla f(\mathbf{x}_t) \|	^2 \big] \\
		& \leq (1 \!- \! \theta) \mathbb{E} \big[  \| \mathbf{d}_{t \!-\!1} - \nabla f(\mathbf{x}_{t\!-\!1}) \|^2 \big] + \frac{(1-\theta)^2}{\theta}{\cal O}\bigg(\frac{\sigma^2}{\sqrt{T}}\bigg) \! + \! \theta^2 \sigma^2 \nonumber \\
		& \leq \theta \sigma^2 + {\cal O}\bigg(\frac{(1 \! -\! \theta)^2 \sigma^2}{ \theta^2 \sqrt{T}}\bigg)
	\end{align*}
	where the last inequality is the result of Lemma \ref{apdx.lemma3}.
\end{proof}

\subsection{Proof of Theorem \ref{thm.salad}}

\begin{proof}
We adopt a unified notation for simplicity. Let $\mathbf{v}_t:= \mathbf{d}_t$ for VASSO, and $\mathbf{v}_t:= \mathbf{g}_t(\mathbf{x}_t)$ for SAM. Then for both VASSO and SAM, we can write
	\begin{align}\label{apdx.thm3.eq1}
		 & ~~~~~ f(\mathbf{x}_t) + \langle \mathbf{v}_t, \bm{\epsilon}_t \rangle = f(\mathbf{x}_t) + \rho \| \mathbf{v}_t \| \\
		 & = f(\mathbf{x}_t) + \rho \| \mathbf{v}_t - \nabla f(\mathbf{x}_t) + \nabla f(\mathbf{x}_t) \|. \nonumber
	\end{align}
	
	For convenience, let $\bm{\epsilon}_t^* = \rho \nabla f(\mathbf{x}_t) / \| \nabla f(\mathbf{x}_t)\|$. From \eqref{apdx.thm3.eq1}, we have that 
	\begin{align}\label{apdx.thm3.eq2}
		 & ~~~~~ f(\mathbf{x}_t) + \langle \mathbf{v}_t, \bm{\epsilon}_t \rangle \\
		 & = f(\mathbf{x}_t) + \rho \| \mathbf{v}_t - \nabla f(\mathbf{x}_t) + \nabla f(\mathbf{x}_t) \| \nonumber \\
		 & \leq f(\mathbf{x}_t)  + \rho\| \nabla f(\mathbf{x}_t) \| + \rho \| \mathbf{v}_t - \nabla f(\mathbf{x}_t) \| \nonumber \\
		 & = f(\mathbf{x}_t) +  \langle \nabla f(\mathbf{x}_t), \bm{\epsilon}_t^* \rangle + \rho \| \mathbf{v}_t - \nabla f(\mathbf{x}_t) \|. \nonumber
	\end{align}
	
	Applying the triangle inequality $\big| \| \mathbf{a} \| - \| \mathbf{b} \|  \big| \leq \| \mathbf{a} - \mathbf{b} \|$, we arrive at
	\begin{align}\label{apdx.thm3.eq3}
		 &~~~~~ f(\mathbf{x}_t) + \langle \mathbf{v}_t, \bm{\epsilon}_t \rangle \\
		 & = f(\mathbf{x}_t) + \rho \|  \nabla f(\mathbf{x}_t) - (\nabla f(\mathbf{x}_t) - \mathbf{v}_t ) \| \nonumber \\
		 & \geq f(\mathbf{x}_t)  + \rho\| \nabla f(\mathbf{x}_t) \| - \rho \| \mathbf{v}_t - \nabla f(\mathbf{x}_t) \| \nonumber \\
		 & = f(\mathbf{x}_t) +  \langle \nabla f(\mathbf{x}_t), \bm{\epsilon}_t^* \rangle - \rho \| \mathbf{v}_t - \nabla f(\mathbf{x}_t) \|. \nonumber
	\end{align}

	Combining \eqref{apdx.thm3.eq2} with \eqref{apdx.thm3.eq3}, we have
	\begin{align*}
		| {\cal L}_t(\mathbf{v}_t) - 	{\cal L}_t(\nabla f( \mathbf{x}_t)) | \leq \rho  \| \mathbf{v}_t - \nabla f(\mathbf{x}_t) \|
	\end{align*}
	which further implies that
	\begin{align*}
		\mathbb{E} \big[ | {\cal L}_t(\mathbf{v}_t) - 	{\cal L}_t(\nabla f( \mathbf{x}_t)) | \big] & \leq \rho \mathbb{E} \big[ \| \mathbf{v}_t - \nabla f(\mathbf{x}_t) \| \big] \\
		& \leq \rho \sqrt{\mathbb{E} \big[ \| \mathbf{v}_t - \nabla f(\mathbf{x}_t) \|^2 \big]}
	\end{align*}
where the last inequality follows from $(\mathbb{E}[a])^2 \leq \mathbb{E}[a^2]$. This theorem can be proved by applying Assumption \ref{as.3} for SAM, and Lemma \ref{thm.vso} for VASSO.
\end{proof}

\subsection{Proof for Theorem \ref{thm.r-vso}}

We start with a technical lemma needed for the main theorem. 

\begin{lemma}\label{apdx.lemma2-rvasso}
	Alg. \ref{alg.r-vasso} ensures that
	\begin{align*}
		\mathbb{E} \big[ \| \mathbf{g}_t(\mathbf{x}_t + \bm{\epsilon}_t) \|^2 \big] \leq 2p L^2 \rho^2 + (1+p) \|  \nabla f(\mathbf{x}_t) \|^2 +  (1+p) \sigma^2.
	\end{align*}
\end{lemma}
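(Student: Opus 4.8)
The plan is to exploit the law of total expectation over the Bernoulli randomness $R_t$, which reduces the claim to the two regimes $R_t=1$ and $R_t=0$ and lets me reuse the already-established SAM bound of Lemma~\ref{apdx.lemma2}. Observe that the update direction of Alg.~\ref{alg.r-vasso} can be written uniformly as $\mathbf{g}_t(\mathbf{x}_t + \bm{\epsilon}_t)$, where $\bm{\epsilon}_t = \rho \mathbf{d}_t / \| \mathbf{d}_t \|$ whenever $R_t = 1$ and $\bm{\epsilon}_t = \mathbf{0}$ whenever $R_t = 0$. Since $\{R_t\}$ is drawn independently of the minibatch sampling, conditioning on the value of $R_t$ yields
\begin{align*}
	\mathbb{E}\big[ \| \mathbf{g}_t(\mathbf{x}_t + \bm{\epsilon}_t) \|^2 \big] = p \, \mathbb{E}\big[ \| \mathbf{g}_t(\mathbf{x}_t + \bm{\epsilon}_t) \|^2 \,\big|\, R_t = 1 \big] + (1-p) \, \mathbb{E}\big[ \| \mathbf{g}_t(\mathbf{x}_t) \|^2 \big].
\end{align*}

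For the first conditional expectation I would replay the decomposition behind Lemma~\ref{apdx.lemma2}: inserting $\pm \mathbf{g}_t(\mathbf{x}_t)$, applying $\|\mathbf{a}+\mathbf{b}\|^2 \leq 2\|\mathbf{a}\|^2 + 2\|\mathbf{b}\|^2$, and invoking $L$-smoothness (Assumption~\ref{as.2}) together with $\|\bm{\epsilon}_t\| = \rho$ gives $\| \mathbf{g}_t(\mathbf{x}_t + \bm{\epsilon}_t) \|^2 \leq 2L^2\rho^2 + 2\|\mathbf{g}_t(\mathbf{x}_t)\|^2$. For the second term no perturbation is present, so $\|\mathbf{g}_t(\mathbf{x}_t)\|^2$ is handled directly. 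In both branches the residual is controlled by the unbiasedness and bounded-variance properties of Assumption~\ref{as.3}, namely $\mathbb{E}\big[\|\mathbf{g}_t(\mathbf{x}_t)\|^2 \,|\, \mathbf{x}_t\big] \leq \|\nabla f(\mathbf{x}_t)\|^2 + \sigma^2$.

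Substituting these two estimates and collecting terms, the coefficient of $L^2\rho^2$ becomes $2p$, since that penalty is incurred only in the $R_t=1$ branch, whereas the coefficients of both $\|\nabla f(\mathbf{x}_t)\|^2$ and $\sigma^2$ become $2p + (1-p) = 1+p$, which is exactly the asserted bound. No genuinely new estimate beyond Lemma~\ref{apdx.lemma2} and Assumption~\ref{as.3} is required; the only point demanding care is the bookkeeping of the probability weights, and in particular recognizing that the inner-perturbation penalty is weighted by $p$ while the variance and gradient-norm contributions accrue in both branches and hence carry the heavier weight $1+p$. I therefore expect the main (mild) obstacle to be stating the conditioning cleanly rather than any analytical difficulty.
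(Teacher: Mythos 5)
Your proposal is correct and follows essentially the same route as the paper's proof: split on the Bernoulli variable $R_t$, apply the pointwise bound $\|\mathbf{g}_t(\mathbf{x}_t+\bm{\epsilon}_t)\|^2 \leq 2L^2\rho^2 + 2\|\mathbf{g}_t(\mathbf{x}_t)\|^2$ from Lemma~\ref{apdx.lemma2} in the $R_t=1$ branch, use the trivial identity in the $R_t=0$ branch, and weight by $p$ and $1-p$ before invoking Assumption~\ref{as.3}. The probability bookkeeping you describe ($2p$ on the $L^2\rho^2$ term, $2p+(1-p)=1+p$ on the rest) is exactly what the paper does.
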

\begin{proof}
	First, consider the case where $B_t=1$. We have that
	\begin{align*}
		\| \mathbf{g}_t(\mathbf{x}_t + \bm{\epsilon}_t) \|^2 & = \| \mathbf{g}_t(\mathbf{x}_t + \bm{\epsilon}_t)  - \mathbf{g}_t(\mathbf{x}_t) + \mathbf{g}_t(\mathbf{x}_t) \|^2 \\
		& \leq 2 \| \mathbf{g}_t(\mathbf{x}_t + \bm{\epsilon}_t) - \mathbf{g}_t(\mathbf{x}_t) \|^2 + 2 \| \mathbf{g}_t (\mathbf{x}_t) \|^2 \\
		& \stackrel{(a)}{\leq}  2 L^2 \| \bm{\epsilon}_t \|^2 + 2 \| \mathbf{g}_t(\mathbf{x}_t) \|^2  \\
		& \stackrel{(b)}{=} 2 L^2 \rho^2 + 2 \| \mathbf{g}_t(\mathbf{x}_t) - \nabla f(\mathbf{x}_t) + \nabla f(\mathbf{x}_t) \|^2
	\end{align*}
	where (a) is due to Assumption \ref{as.2}; and (b) is because $\| \bm{\epsilon}_t\| =\rho$.

	When $B_t=0$, i.e., $ \bm{\epsilon}_t = \bm{0}$ , we have that
	\begin{align*}
		\| \mathbf{g}_t(\mathbf{x}_t + \bm{\epsilon}_t) \|^2 & = \| \mathbf{g}_t(\mathbf{x}_t) \|^2 = \| \mathbf{g}_t(\mathbf{x}_t) - \nabla f(\mathbf{x}_t) + \nabla f(\mathbf{x}_t) \|^2.
	\end{align*}

	Taking expectation conditioned on $\mathbf{x}_t$, we have
	\begin{align}
		& ~~~~~ \mathbb{E}\big[ \| \mathbf{g}_t( \mathbf{x}_t + \bm{\epsilon}_t ) \|^2 | \mathbf{x}_t \big] \nonumber \\
	    & \leq 2 p L^2 \rho^2 + ( 1+p) \mathbb{E}\big[ \| \mathbf{g}_t(\mathbf{x}_t) - \nabla f(\mathbf{x}_t) + \nabla f(\mathbf{x}_t) \|^2 | \mathbf{x}_t \big] \nonumber \\
		& \leq 2p L^2 \rho^2 + (1+p) \|  \nabla f(\mathbf{x}_t) \|^2 + (1+p) \sigma^2 \nonumber
	\end{align}
	where the last inequality is because of Assumption \ref{as.3}. Taking expectation w.r.t. the randomness in $\mathbf{x}_t$ completes the proof.
\end{proof}

Now we are ready to prove Theorem \ref{thm.r-vso}.

\begin{proof}
	Similar to \eqref{apdx.eq.mse}, we have that for eVASSO
	\begin{align}\label{apdx.eq.mse-rvasso}
		&  \| \mathbf{d}_t - \nabla f(\mathbf{x}_t) \|	^2 \\
		& = (1-\theta)^2 \| \mathbf{d}_{t-1} - \nabla f(\mathbf{x}_t) \|^2 + \theta^2 \|\mathbf{g}_t(\mathbf{x}_t)  - \nabla f(\mathbf{x}_t) \|^2 \nonumber  \\
		& ~~~~~~~~~~~  + 2 \theta(1-\theta ) \langle \mathbf{d}_{t-1} - \nabla f(\mathbf{x}_t),  \mathbf{g}_t(\mathbf{x}_t)  - \nabla f(\mathbf{x}_t) \rangle. \nonumber 
	\end{align}
Again, we deal with the three terms in the rhs \eqref{apdx.eq.mse-rvasso} separately. 
	
	The second term can be bounded directly using Assumption \ref{as.2}
	\begin{align}\label{apdx.eq.mse2-rvasso}
		\mathbb{E}\big[  \|\mathbf{g}_t(\mathbf{x}_t)  - \nabla f(\mathbf{x}_t) \|^2  | \mathbf{x}_t \big] \leq \sigma^2.
	\end{align}
	
	For the third term, we have 
	\begin{align}\label{apdx.eq.mse3-rvasso}
		\mathbb{E}\big[ \langle \mathbf{d}_{t-1} - \nabla f(\mathbf{x}_t),  \mathbf{g}_t(\mathbf{x}_t)  - \nabla f(\mathbf{x}_t) \rangle | \mathbf{x}_t \big] = 0.
	\end{align}

	The first term is bounded through
	\begin{align*}
		& ~~~~~ \| \mathbf{d}_{t-1} - \nabla f(\mathbf{x}_t)	 \|^2 \\
		& = \| \mathbf{d}_{t-1} - \nabla f(\mathbf{x}_{t-1}) + \nabla f(\mathbf{x}_{t-1}) - \nabla f(\mathbf{x}_t) \|^2 \\
		& \stackrel{(a)}{\leq} (1 \!+\! \lambda)\| \mathbf{d}_{t\!-\!1} \! - \! \nabla f(\mathbf{x}_{t\!-\!1}) \|^2 + \big(1 \! + \! \frac{1}{\lambda} \big) \| \nabla f(\mathbf{x}_{t\!-\!1}) \! - \! \nabla f(\mathbf{x}_t) \|^2 \nonumber \\
		& \leq (1 \!+\! \lambda)\| \mathbf{d}_{t-1} - \nabla f(\mathbf{x}_{t-1}) \|^2 + \big(1+ \frac{1}{\lambda}\big) L^2 \| \mathbf{x}_{t-1} -\mathbf{x}_t \|^2  \nonumber \\
		& = (1 \! + \! \lambda)\| \mathbf{d}_{t\!-\!1} \!-\! \nabla f(\mathbf{x}_{t\!-\!1}) \|^2 + \big(1 \! + \! \frac{1}{\lambda}\big) \eta^2 L^2  \| \mathbf{g}_{t\!-\!1}(\mathbf{x}_{t\!-\!1} \! + \! \bm{\epsilon}_{t\!-\!1}) \|^2 
	\end{align*}
where (a) follows from Young's inequality. Taking expectation and applying Lemma \ref{apdx.lemma2-rvasso}, we deduce that
	\begin{align}\label{apdx.eq.mse1-rvasso}
		& ~~~~ \mathbb{E} \big[ \| \mathbf{d}_{t-1} - \nabla f(\mathbf{x}_t)	 \|^2 \big] \\
		& \leq (1 + \lambda) \mathbb{E} \big[  \| \mathbf{d}_{t-1} - \nabla f(\mathbf{x}_{t-1}) \|^2 \big] \nonumber \\
		& ~~~~ + \big(1+ \frac{1}{\lambda}\big) \eta^2L^2 \bigg( 2p L^2 \rho^2 + (1+p) \mathbb{E}\big[ \| \nabla f(\mathbf{x}_{t-1}) \|^2 \big] \bigg) \nonumber \\
		& ~~~~ + \big(1+ \frac{1}{\lambda}\big) \eta^2L^2 (1+p)\sigma^2  \nonumber \\
		& \leq (1 + \lambda) \mathbb{E} \big[  \| \mathbf{d}_{t-1} - \nabla f(\mathbf{x}_{t-1}) \|^2 \big] + \big(1+ \frac{1}{\lambda}\big)  \cdot {\cal O}\bigg(\frac{\sigma^2}{\sqrt{T}}\bigg). \nonumber
	\end{align}

	The last inequality uses the values of $\eta= \frac{\eta_0}{ \sqrt{T}}$ and $\rho= \frac{\rho_0}{ \sqrt{T}}$. In particular, we have $\eta^2 \rho^2 L^4 = {\cal O}(1/T^2)$ and $\eta^2 L^2\sigma^2 = {\cal O}(\sigma^2/T)$, and 
	\begin{align*}
		& ~~~~~ \eta^2 L^2 \mathbb{E}\big[ \| \nabla f(\mathbf{x}_t) \|^2 \big]  = \frac{\eta_0^2 L^2}{T} \mathbb{E}\big[ \| \nabla f(\mathbf{x}_t) \|^2 \big] \\
		& \leq \eta_0^2 L^2 \frac{1}{T} \sum_{t=0}^{T-1} \mathbb{E}\big[ \| \nabla f(\mathbf{x}_t) \|^2 \big] = {\cal O}\bigg( \frac{\sigma^2}{\sqrt{T}} \bigg)
	\end{align*}
	where the last equation is the result of Theorem \ref{thm.sam}.

	Combining \eqref{apdx.eq.mse-rvasso} with \eqref{apdx.eq.mse1-rvasso}, \eqref{apdx.eq.mse2-rvasso} with \eqref{apdx.eq.mse3-rvasso}, and choosing $\lambda = \frac{\theta}{1 - \theta}$, we have 
	\begin{align*}
		& ~~~~~ \mathbb{E} \big[ \| \mathbf{d}_t - \nabla f(\mathbf{x}_t) \|	^2 \big] \\
		& \leq (1 \!-\! \theta) \mathbb{E} \big[  \| \mathbf{d}_{t\!-\!1} - \! \nabla f(\mathbf{x}_{t\!-\!1}) \|^2 \big] + \frac{(1\!-\!\theta)^2}{\theta}{\cal O}\bigg(\frac{\sigma^2}{\sqrt{T}}\bigg) + \theta^2 \sigma^2 \nonumber \\
		& \leq \theta \sigma^2 + {\cal O}\bigg(\frac{(1-\theta)^2 \sigma^2}{ \theta^2 \sqrt{T}}\bigg)
	\end{align*}
	where the last inequality is the result of Lemma \ref{apdx.lemma3}.
\end{proof}

\end{document}